\let\MYoriglatexcaption\caption
\renewcommand{\caption}[2][\relax]{\MYoriglatexcaption[#2]{#2}}
\DeclareRobustCommand\onedot{\futurelet\@let@token\@onedot}
\def\@onedot{\ifx\@let@token.\else.\null\fi\xspace}
\def\eg{\emph{e.g}\onedot} 
\def\ie{\emph{i.e}\onedot}
\def\etal{\emph{et al}\onedot}
\newcommand{\mypar}[1]{\smallskip\noindent {\bf #1}\enskip}
\newcommand{\revise}[1]{\textcolor{black}{#1}}
\newcommand{\minor}[1]{\textcolor{black}{#1}}
\newcommand{\bX}{\bm{X}}
\newcommand{\bY}{\bm{Y}}
\newcommand{\bx}{\bm{x}}
\newcommand{\by}{\bm{y}}
\newcommand{\bg}{\bm{\gamma}}
\newcommand{\bH}{\bm{H}}
\newcommand{\bU}{\bm{U}}
\newcommand{\bb}{\bm{\beta}}
\newcommand{\be}{\bm{\varepsilon}}
\newcommand{\ci}[1]{\tiny{$\pm #1$}}
\providecommand{\tabularnewline}{\\}
\providecommand{\algorithmname}{Algorithm}
\theoremstyle{remark}
\newtheorem*{rem*}{\protect\remarkname}
\theoremstyle{plain}
\newtheorem{thm}{\protect\theoremname}
\theoremstyle{plain}
\theoremstyle{plain}
\newtheorem{lem}[thm]{\protect\lemmaname}
\theoremstyle{plain}
\theoremstyle{definition}
\theoremstyle{plain}
\newtheorem{prop}[thm]{\protect\propositionname}
\providecommand{\assumptionname}{Assumption}
\providecommand{\corollaryname}{Corollary}
\providecommand{\definitionname}{Definition}
\providecommand{\lemmaname}{Lemma}
\providecommand{\propositionname}{Proposition}
\providecommand{\remarkname}{Remark}
\providecommand{\theoremname}{Theorem}
\def\eg{\textit{e.g.}}
\def\ie{\textit{i.e.}}
\def\etal{\textit{et al.}}
\begin{document}

\title{How to Trust Unlabeled Data? Instance Credibility Inference for Few-Shot Learning}

\author{Yikai~Wang,~Li~Zhang,~Yuan~Yao,~and~Yanwei~Fu
\IEEEcompsocitemizethanks{
\IEEEcompsocthanksitem Yuan Yao and Yanwei Fu are  the co-corresponding authors.
\IEEEcompsocthanksitem Yikai Wang, Li Zhang and Yanwei Fu are with the School of Data Science, Fudan University, and Shanghai Key Lab of
Intelligent Information Processing, Fudan University. Yanwei Fu is also with the MOE Frontiers Center for Brain Science, Fudan University. E-mail: \{yikaiwang19, lizhangfd, yanweifu\}@fudan.edu.cn
\IEEEcompsocthanksitem Yuan Yao is with the Department of Mathematics, Hong Kong University of Science and Technology.
E-mail: yuany@ust.hk
}
}

\IEEEtitleabstractindextext{
\begin{abstract}
\justifying
Deep learning based models have excelled in many computer vision tasks and appear to surpass humans' performance.
However, these models require an avalanche of expensive human labeled training data and many iterations to train their large number of parameters.
This severely limits their scalability to the real-world long-tail distributed categories, some of which are with a large number of instances, but with only a few manually annotated.
Learning from such extremely limited labeled examples is known as Few-Shot Learning (FSL).
Different to prior arts that leverage meta-learning or data augmentation strategies to alleviate this extremely data-scarce problem, this paper presents a statistical approach, dubbed Instance Credibility Inference (ICI) to exploit the support of unlabeled instances for few-shot visual recognition.
Typically, we repurpose the self-taught learning paradigm to predict pseudo-labels of unlabeled instances \revise{with} an initial classifier trained from the \emph{few shot} and then select the most confident ones to augment the training set to re-train the classifier.
\revise{This is achieved by constructing} a (Generalized) Linear Model (LM/GLM) with incidental parameters to model the mapping from (un-)labeled features to their (pseudo-)labels, in which the sparsity of the incidental parameters indicates the credibility of the corresponding pseudo-labeled instance.
We rank the credibility of 
\revise{pseudo-labeled instances}
along the regularization path of their corresponding incidental parameters, and the most trustworthy  pseudo-labeled examples are preserved as the augmented labeled instances.
This process is repeated until all the unlabeled samples are
included in the expanded training set.
Theoretically, under the conditions of \emph{ restricted eigenvalue, irrepresentability, and large error},
our approach is guaranteed to collect \emph{all} the correctly-predicted pseudo-labeled instances from the noisy pseudo-labeled set.
\revise{
Extensive experiments under two few-shot settings show the effectiveness of our approach on four widely used few-shot visual recognition benchmark datasets including \textit{mini}ImageNet, \textit{tiered}ImageNet, CIFAR-FS, and CUB.
}
Code and models are released at \url{https://github.com/Yikai-Wang/ICI-FSL}.
\end{abstract}

\begin{IEEEkeywords}
Few-Shot Learning, Incidental Parameters, Regularization Path, Semi-Supervised Learning, Self-Taught Learning.
\end{IEEEkeywords}}

\maketitle

\IEEEdisplaynontitleabstractindextext

\IEEEpeerreviewmaketitle

\IEEEraisesectionheading{\section{Introduction}\label{sec:introduction}}
\begin{figure*}[hbt!]
\begin{centering}
\includegraphics[width=2\columnwidth]{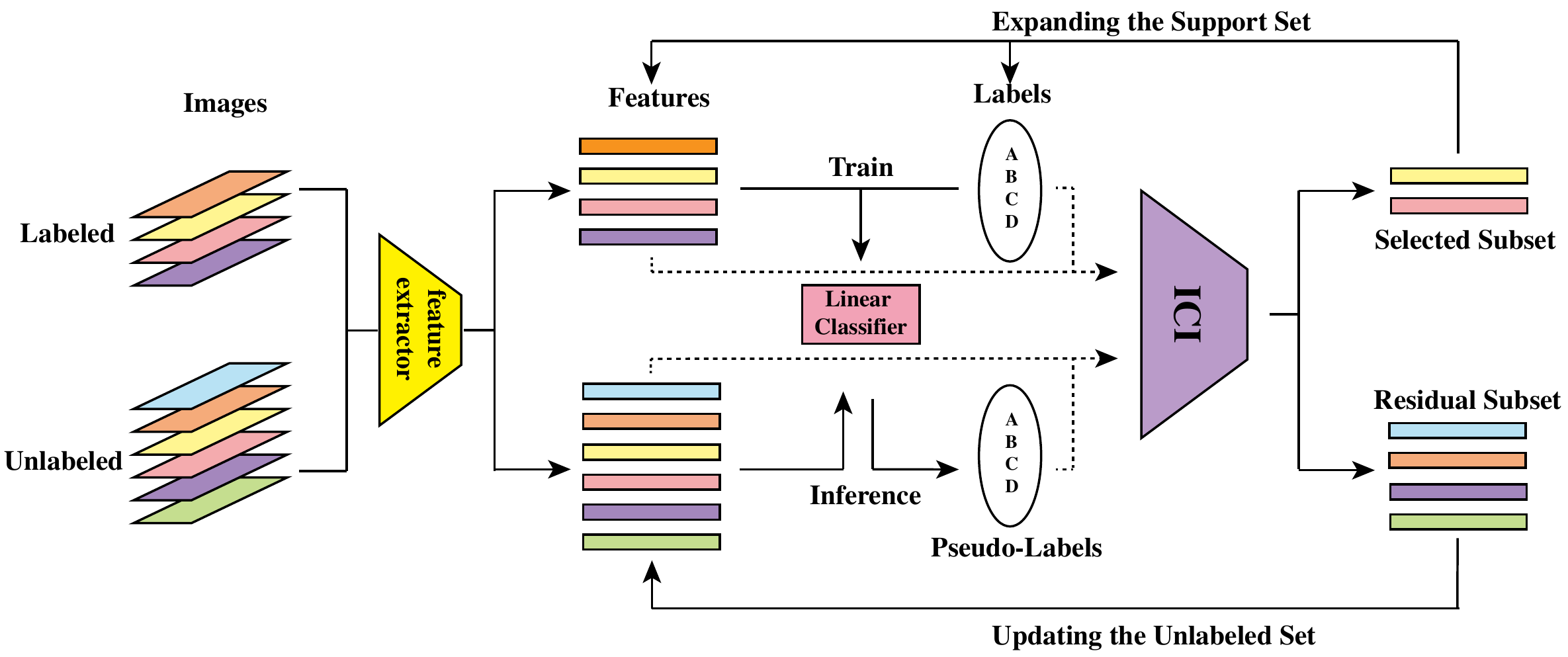}
\par\end{centering}
\caption{\label{fig:framework} 
The inference process of our proposed framework.
We extract features of each labeled and unlabeled instance, 
train a linear classifier with the support set, 
provide pseudo-label for the unlabeled instances, 
and use ICI to select the most trustworthy subset to expand the support set.
This process is repeated until all the unlabeled data are included in the support set.}
\end{figure*}

\IEEEPARstart{H}{umans} \revise{are able to efficiently perform visual recognition by learning from a single example or a single exposure.}
For example, children have no problem of forming the concept of ``giraffe'' by only taking a glance from a picture in a book~\cite{wang2020generalizing}, 
or hearing its description as looking like a deer with a long neck~\cite{zhang2017learning}.
In contrast, 
\revise{the most successful recognition systems, deep learning based in particular}~\cite{krizhevsky2012imagenet,simonyan2014very,he2016deep,huang2017densely} still highly rely on an avalanche of labeled training data.
\revise{This is problematic. It inevitably} increases the burden in rare data collection (\eg~accident data in the autonomous driving scenario) and
expensive data annotation (\eg~disease data for medical diagnose), 
and more fundamentally limits their scalability to open-ended learning of the long tail categories in the real-world.

Motivated by these observations, there has been a recent resurgence of research interest in few-shot learning~\cite{finn2017model,snell2017prototypical,sung2018learning,vinyals2016matching}.
It aims to recognize new objects with extremely limited training data for each category. 
\revise{
To address this issue, the key idea is to train the model by transferring the knowledge from a disjoint but relevant dataset.
Typically, the model trained on the \emph{source/base} dataset, which includes many labeled instances, is  expected  to be well generalizable to  the \emph{ target/novel} dataset with only scarce labeled data. 
}

A key challenge for few-shot learning is how to transfer the learned knowledge to new tasks.
The simplest strategy 
is fine-tuning~\cite{yosinski2014transferable}, utilizing the limited training instances to update the learned models.
Practically, it inevitably causes severely overfitting as one or a few instances are insufficient to model the data distributions of the novel classes.
Data augmentation and regularization techniques~\cite{chen2019image,chen2019multi} can alleviate overfitting in such a limited-data regime, but they do not solve it.
Several recent efforts are made in leveraging learning to learn, or meta-learning~\cite{lemke2015metalearning} paradigm by simulating the few-shot scenario in the training process~\cite{vinyals2016matching, snell2017prototypical, oreshkin2018tadam, sung2018learning, sung2017learning, finn2017model, li2017meta, nichol2018first, rusu2018meta}. 
However, Chen \etal~\cite{DBLP:journals/corr/abs-1904-04232} empirically argues that such a learning paradigm often results in inferior performance compared to a simple baseline with a linear classifier coupled with a deep feature extractor.
This phenomenon is also verified in~\cite{Liu_2020_CVPR_Workshops}.

In real-world applications,
unlabeled instances are easier and cheaper to obtain, comparing to the labeled instances which usually require expensive human annotation.
Potentially we could utilize the unlabeled instances to alleviate the data-scarce problem and help learn the few-shot model.
Specifically,
two types of strategies resort to model the data distribution of novel category beyond traditional \emph{inductive} few-shot learning:
(i) semi-supervised few-shot learning (SSFSL)~\cite{liu2018learning,ren2018meta,sun2019learning} supposes that we can utilize 
unlabeled data to help to learn the 
model;
furthermore,
(ii) transductive inference~\cite{joachims1999transductive} for few-shot learning (TFSL)~\cite{liu2018learning,qiao2019transductive} assumes we can access all the test data, rather than evaluate them one by one in the inference process. 
In other words, the few-shot learning model can utilize the data distributions of testing examples.

Self-taught learning~\cite{self-taught-learning} is one of the most straightforward ways to leverage the information of unlabeled data. 
Typically, a trained classifier infers the pseudo labels of unlabeled data, which are further taken to update the classifier. 
\revise{
Nevertheless, the inferred pseudo-labels may be very noisy;
the wrongly labeled instances may jeopardize the performance of the classifier. 
}
It is thus essential to investigate the labeling confidence of each unlabeled instance.

To this end, we present a statistical approach, dubbed Instance Credibility Inference (ICI) to exploit the distribution support of unlabeled instances for few-shot learning.
Specifically, we first train a 
simple linear classifier (\eg~logistic regression, or linear support vector machine)
with the labeled few-shot examples and use it to infer the pseudo-labels for the unlabeled instances.
\revise{
The credibility of each pseudo-labeled instances is measured by the proposed ICI.
Then a most trustworthy subset can be selected and expanded into the support set.
}
The simple classifier thus can be progressively updated (re-trained) by the expanded support set and further infer pseudo-labels for the unlabeled data.
This process is repeated until all the unlabeled instances are iteratively selected to expand the support set, \ie~the pseudo-label of each unlabeled instance is converged.
The schematic illustration is shown in Fig.~\ref{fig:framework}.

Basically, we re-purpose the standard self-taught learning algorithm by our proposed ICI algorithm.
How to select the pseudo-labeled data and exclude the wrongly-predicted samples,~\ie, excluding the noise introduced by the self-taught learning strategy?
Our intuition is that the credibility criteria can neither solely rely on the manifold structure of the feature space (\eg~instances that are close to labeled instances under a certain distance metric) nor the 
label space (\eg~prediction score provided by the classifier).
Instead, 
we propose to solve the hypothesis of (generalized) linear models (\ie~linear regression or logistic regression) {by progressively increasing} the sparsity of the data-dependent incidental parameter~\cite{fan2018partial} until it vanishes.
Thus we can credit each pseudo-labeled instance by the sparsity of the corresponding incidental parameter.
We prove that under the conditions of 
restricted eigenvalue, irrepresentability, and large error,
our proposed method is able to collect \emph{all} the correctly-predicted pseudo-labeled instances.
We conduct extensive experiments on major few-shot learning benchmark datasets to validate the effectiveness of our proposed algorithm.

\noindent \textbf{Contributions.} 
The contributions of this work are as follows.

\noindent
(i) We present a statistical approach, dubbed Instance Credibility Inference (ICI) to exploit the distribution support of unlabeled instances for few-shot learning. 
Specifically, our model iteratively selects the pseudo-labeled instances according to its credibility measured by the proposed ICI for classifier training.

\noindent
(ii) We re-purpose the standard self-taught learning algorithm~\cite{self-taught-learning} by our proposed ICI. 
To measure the credibility of each pseudo-labeled instance, we solve the LM/GLM hypothesis by increasing the sparsity of the incidental parameter~\cite{fan2018partial} and 
regard the sparsity level as the credibility for each pseudo-labeled instance. 

\noindent
(iii) Under the conditions of 
restricted eigenvalue, irrepresentability, and large error,
we can prove that our method collects \emph{all} the correctly-predicted pseudo-labeled instances.

\noindent
(iv) Extensive experiments under two few-shot settings show \revise{the effectiveness of our approach} on four widely used few-shot learning benchmark datasets including \textit{mini}ImageNet, \textit{tiered}ImageNet, CIFAR-FS, and CUB.

\noindent \textbf{Extensions.} 
A preliminary version of this work was published in~\cite{wang2020instance}. 
We have extended our conference version as follows.

\noindent
(i) 
We provide the theoretical analysis 
of ICI to answer the question that 
\textit{under what conditions can ICI find \textbf{all} the correctly-predicted instances}?

\noindent
(ii)
We show that our ICI can be extended to generalized linear models, in particular, a \emph{logistic regression model with sparse incidental parameters}. Particularly we show in our experiments the effectiveness of such a logistic regression model with sparsity regularization for ICI.

\section{Related work}
\subsection{Semi-supervised learning} 
Semi-supervised learning (SSL) aims to improve the learning performance with both labeled and unlabeled instances.
Basic assumptions in semi-supervised learning include continuity, cluster, and manifold assumptions.
Conventional approaches focus on finding decision boundaries with both labeled and unlabeled data~\cite{vapnik1998statistical,bennett1999semi,joachims1999transductive}, and avoiding to learn the ``wrong'' knowledge from the unlabeled data~\cite{li2014towards} based on specific hypothesis.
Recently, semi-supervised learning with deep learning models use consistency regularization~\cite{conf/iclr/LaineA17}, moving average technique~\cite{tarvainen2017mean} and adversarial perturbation regularization~\cite{miayto2016virtual} to train the model with a large amount of unlabeled data.
\revise{
The task of semi-supervised few-shot learning is an extension of addressing SSL in the setting of few-shot learning, where only limited labeled target instances are available. Critically,  as explained in~\cite{ren2018meta}, the vanilla SSL is solved in the standard supervised learning setting, whilst the SSFSL targets at addressing a transfer learning task.
}

\subsection{Self-taught learning}
Self-taught learning~\cite{self-taught-learning}, also known as self-training~\cite{NoisyStudent},
is a traditional semi-supervised strategy of utilizing unlabeled data to improve the performance of classifiers~\cite{amini2002semi,grandvalet2005semi}. 
Self-taught learning algorithms often start by training an initial recognition model and infer the pseudo-labels of unlabeled instances, then the pseudo-labeled instances are taken to re-train the recognition model with specific strategies~\cite{lee2013pseudo}.
Deep learning based self-taught learning strategy includes 
(i) directly training the neural network with both labeled instances and pseudo-labeled instances ~\cite{lee2013pseudo}, 
(ii) utilizing mix-up images between labeled instances and pseudo-labeled instances to synthesis training instances with less noise~\cite{arazo2019pseudo},
(iii) utilizing indirect ways to infer the pseudo-label of unlabeled instances (for example use label propagation constructed on the nearest-neighbor graph and select the trustworthy subset based on the entropy~\cite{iscen2019label}),
and 
(iv) methods that introducing inductive bias (\eg~adding a cluster assumption on the feature space and re-weight the pseudo-labeled instances based on this assumption~\cite{shi2018transductive})
One of the key points in self-taught learning algorithms is how to reduce the noise introduced by the imperfect recognition models.
Different from previous works, 
we measure the credibility of each pseudo-labeled instance by a statistical algorithm.
Only the most trustworthy subset is employed to re-train the recognition model jointly with the labeled instances.

\subsection{Learning with noisy labels}
\revise{
There are many works on learning with noisy labels~\cite{angluin1988learning}.  
The noisy labels indicate that the provided label may not be the true class of the instance. 
Such noise may come from the annotation errors, mismatching of the search engine, or the pseudo-label in the self-taught learning process.
Typical approaches in learning with noisy labels~\cite{song2020learning} include robust loss function~\cite{ghosh2017robust}, robust architecture~\cite{goldberger2016training}, robust regularization~\cite{jenni2018deep}, loss adjustment~\cite{chang2017active, ICML2019_UnsupervisedLabelNoise}, and sample selection~\cite{song2020robust}. }

\revise{Sample selection aims to find clean subset from the noisy dataset to prevent the negative impact of noise.
In deep learning based approaches, a popular assumption is that when the network is under-fitted, the loss of noisy samples are larger than clean samples. 
O2u-net~\cite{huang2019o2u} cyclically changes the learning rate of the network to satisfy the under-fitting condition,   measure the loss of each sample and exclude the noisy subset.
ODD~\cite{song2020robust} uses large learning rate to exclude the samples with higher losses.}

\revise{However, almost all of these algorithms are based on the inherent assumption that a large number of training samples are accessible.
Further, they mainly focus on the standard supervised learning setting.
In contrast, SSFSL focuses on the transfer learning tasks.
}

\subsection{Few-shot learning}
Few-shot learning aims to recognize novel visual categories from very few labeled examples.
Recent efforts mainly follow the meta-learning strategy.
That is, by simulating the few-shot scenario in the training process, algorithms are learning to learn with limited data.
\revise{
We can roughly categorize existing works on few-shot learning into the following groups.
(i)
Learning robust and discriminative distance metrics,
including weighted nearest neighbor classifier (\eg~Matching Network~\cite{vinyals2016matching}), 
finding robust prototype for each class (\eg~Prototypical Network~\cite{snell2017prototypical}),
learning task-dependent metrics (\eg~TADAM~\cite{oreshkin2018tadam}),
and learning parameterized metrics via neural networks~\cite{sung2018learning}.
(ii)
Finding the optimal initialization parameters that could rapidly adapt to specific task, including Meta-Critic~\cite{sung2017learning}, MAML~\cite{finn2017model}, Meta-SGD~\cite{li2017meta}, Reptile~\cite{nichol2018first}, and LEO~\cite{rusu2018meta}.
(iii) Data augmentation strategies aim to alleviate the problem of limited data by directly synthesising new data in the image level~\cite{chen2019image} or the feature level~\cite{chen2019multi}. 
Additionally, SNAIL~\cite{mishra2018a} utilizes the sequence modeling to create a new framework.
The proposed statistical algorithm is orthogonal 
and
potentially beneficial to these algorithms -- it is always worth increasing the training set by utilizing the unlabeled data with confidently predicted labels.
}

\subsection{Few-shot learning with unlabeled data}
\revise{
Recent works~\cite{hou2019cross,hu2020exploiting,lichtenstein2020tafssl,yang2020dpgn,hu2020leveraging,kye2020transductive} start to tackle few-shot learning with additional unlabeled instances.
Compared with the traditional inductive setting, algorithms trained with unlabeled instances have the chance to handle a more trustworthy empirical distribution.
Ren~\emph{et al.}~\cite{ren2018meta} utilized the unlabeled data to refine the prototype of each class.
Liu~\emph{et al.}~\cite{liu2018learning} utilized label propagation strategy to transfer labels based on the relative distances within labeled data and unlabeled data.
DPGN~\cite{yang2020dpgn} adopts contrastive comparisons to produce distribution representation.} 

\revise{Self-taught learning is also utilized in SSFSL.
For example, LST~\cite{sun2019learning} uses the self-taught learning strategy in the transductive inference setting and trains the model in a meta-learning manner.
CAN~\cite{hou2019cross} uses the self-taught learning to train the model repeatedly within the specific designed network.
TAFSSL~\cite{lichtenstein2020tafssl} reduces the dimension of sample features to get a simpler manifold and construct specific self-taught learning algorithm based on the low-dimensional manifold.
Compared with those algorithms, our approach is much simpler and theoretically guaranteed.
Unlike previous meta-learning algorithms which usually has pre-training, meta-training, and meta-test process~\cite{sun2019learning},
our approach only modifies the inference process.
}

\subsection{Incidental parameters}
Incidental parameters problem~\cite{neyman1948consistent} was tackled by the penalized estimation algorithms~\cite{fan2010selective}.
It assumes the existence of sparse data-dependent parameters in the estimation models.
For example, the linear regression model with incidental parameters follows 
$y_i=x_i^{\top}\beta^{*}+\gamma_i^{*}+\varepsilon_i$,
where $\left(x_i,y_i\right)$ denotes data input, $\beta^{*}$ is the traditional coefficients, $\varepsilon_i$ denotes the random noise and $\gamma_i^{*}$ is the introduced data-dependent incidental parameters.
Prior arts solve this problem by estimating the coefficients which are robust against the incidental parameters~\cite{neyman1948consistent, kiefer1956consistency, basu2011elimination, moreira2008maximum, fan2018partial}.
Fu~\emph{et al.}~\cite{fu2015robust} introduce the incidental parameter in robust ranking task.
In this paper, we propose to solve the few-shot learning problem based on the intuition that the incidental parameters indicate the credibility of pseudo-labeled instances.
We do so by utilizing a weak estimation of coefficients to enlarge the influence of incidental parameters and transfer a 
\revise{``\emph{generalized linear model with incidental parameters}'' into a normal ``\emph{generalized linear}'' model }
whose coefficients are the former incidental parameters.
Then we estimate the incidental parameters along the regularization path to get the credibility of the corresponding instance.
We further provide the theoretical properties of ICI.
\section{Methodology}
\subsection{Problem formulation}
Here we define the few-shot learning problem mathematically.
We are provided a base category set and a novel category set, denoted as $\mathcal{C}_{base}$ and $\mathcal{C}_{novel}$, respectively.
The two category sets have no common category\footnote{Note that here and below we ignore another validation set for model selection since we could regard it as the novel set that is accessible in the training process.}, \ie, $\mathcal{C}_{base}\bigcap\mathcal{C}_{novel}=\emptyset$. 
Within each category set, we have a corresponding dataset, denoted as $\mathcal{D}_{base}=\left\{ \left(\bm{I}_{i},y_{i}\right),y_{i}\in\mathcal{C}_{base}\right\}$ and $\mathcal{D}_{novel}=\left\{ \left(\bm{I}_{i},y_{i}\right),y_{i}\in\mathcal{C}_{novel}\right\}$, respectively.
With the above notations, few-shot learning algorithms aim to train on $\mathcal{D}_{base}$ and contain the capacity of rapidly 
\revise{adapting}
to $\mathcal{D}_{novel}$ with access to only one or a few labeled instances per class.

For evaluation, we adopt the standard \emph{$c$-way-$m$-shot} classification as \cite{vinyals2016matching} on $\mathcal{D}_{novel}$. 
Specifically, in each episode, we randomly sample $c$ classes to construct our category pool $\mathcal{C}$, that is $\mathcal{C}\sim \mathcal{C}_{novel},\left|\mathcal{C}\right|=c$;
and $s$ and $q$ labeled images per class are randomly sampled in $\mathcal{C}$ to construct the support set $\mathcal{S}$ and the query set $\mathcal{Q}$, respectively.
Thus we have $\left|\mathcal{S}\right|=c\times s$ and $\left|\mathcal{Q}\right|=c\times q$.
The classification accuracy is averaged on query sets $\mathcal{Q}$ of many
meta-testing episodes. In addition, we have unlabeled
data of novel categories $\mathcal{U}_{novel}=\left\{ \bm{I}_{u}\right\} $. 

\subsection{Self-taught learning from unlabeled data}
We recap the self-taught learning formalism~\cite{self-taught-learning} to tackle few-shot learning problem with unlabeled data.
Particularly,
denote $f\left(\cdot\right)$ as the feature extractor trained
on $\mathcal{D}_{base}$. 
In one episode,
one can train a supervised classifier
$g\left(\cdot\right)$ on the support set $\mathcal{S}$, 
and pseudo-labeling unlabeled data, $\hat{y}_{i}=g\left(f\left(\bm{I}_{u}\right)\right)$
with corresponding confidence $p_{i}$.
The most confident unlabeled instances will be further taken as additional data of corresponding classes in the support set $\mathcal{S}$. 
Thus we obtain the updated supervised classifier $g\left(\cdot\right)$. 
To this end, few-shot classifier acquires additional training instances, 
and thus its performance can be improved. 

However, it is problematic if directly utilizing self-taught learning in few-shot cases. 
Particularly, the supervised classifier $g\left(\cdot\right)$ is only trained by a few instances. 
The unlabeled instances with high confidence may not be correctly categorized, and the classifier will be updated by some wrong instances. 
Even worse, one can not assume the unlabeled instances follows the same class labels or generative distribution as the labeled data.
Noisy instances or outliers may also be utilized to update the classifiers. 
To this end, we propose a systematical algorithm: 
Instance Credibility Inference (ICI) to reduce the noise.

\subsection{Instance credibility inference (ICI)}
To measure the credibility of predicted labels over unlabeled data, we introduce a hypothesis of linear model by regressing each instance from feature to label spaces.  
Particularly, given $n$  instances of $c$ classes,
$\mathcal{S}=\left\{ \left(\bm{I}_{i},y_{i},\bx_{i}\right),y_{i}\in\mathcal{C}_{novel}\right\} $, where $y_i$ is the ground truth when $\bm{I}_{i}$ comes from the support set, or the pseudo-label when $\bm{I}_{i}$ comes from the unlabeled set; \revise{$\bm{x}_{i}$ is the feature vector of instance $i$}. We employ a simple linear regression model to ``predict'' the class label,
\begin{equation}
\by_{i}=\bm{x}_{i}^{\top}\bb^{*}+\bg_{i}^{*}+\bm{\varepsilon}_{i}\label{eq:lm},
\end{equation}
where $\bm{\beta}^{*}\in\mathcal{\mathbb{R}}^{d\times c}$ is the coefficient matrix;
$\bm{x}_{i}\in\mathcal{\mathbb{R}}^{d\times1}$; 
$\bm{y}_{i}$ is $c$ dimension one-hot vector denoting the class label of instance $i$,
and \minor{
$\varepsilon_{ij}$ is independent sub-Gaussian noise of zero mean and variance bounded by $\sigma^2$
}. 
Note that to facilitate the computations, 
we employ Locally Linear Embedding (LLE)~\cite{roweis2000nonlinear} to reduce the dimension of extracted feature $f(\bm{I}_{i})$ to $d$. 

Inspired by incidental parameters~\cite{fan2018partial}, 
we introduce $\gamma_{i,j}^{*}$ to amend the chance of instance $i$ belonging to class  $j$. The
larger \revise{magnitude of} $\left\Vert\gamma_{i,j}^{*}\right\Vert$, the higher difficulty in attributing instance $i$ to  class  $j$.

Consider the linear regression model for all instances, we are solving the problem of
\begin{equation}
\underset{\bm{\beta},\bm{\gamma}}{\mathrm{argmin}}\sum_{i=1}^{n}\left[\frac{1}{2}\left\Vert\bm{y}_{i}-\bm{x}_{i}^{\top}\bm{\beta}-\bm{\gamma}_{i}\right\Vert_{2}^{2}+\lambda R\left(\bm{\gamma}_{i}\right)\right]\label{eq:loss_func_instances},
\end{equation}
where $R\left(\cdot\right)$ is the sparsity penalty, \eg, $R\left(\bg_i\right)=\sum_{j=1}^c\left|\bg_{i,j}\right|$.
By re-writing Eq.~\eqref{eq:loss_func_instances} in a matrix form, we are thus solving the problem of
\begin{equation}
\left(\hat{\bm{\beta}},\hat{\bm{\gamma}}\right)=\underset{\bm{\beta},\bm{\gamma}}{\mathrm{argmin}}\frac{1}{2}\left\Vert\bm{Y}-\bm{X}\bm{\beta}-\bm{\gamma}\right\Vert _{\operatorname{F}}^{2}+\lambda R\left(\bm{\gamma}\right)\label{eq:loss_func},
\end{equation}
where $\left\Vert \cdot \right\Vert _{\operatorname{F}}^{2}$ denotes the Frobenius norm. $\bm{Y}=[\bm{y}_{i}^{\top}]^{\top}\in\mathcal{\mathbb{R}}^{n\times c}$ and
$\bm{X}=[\bm{x}_{i}]^{\top}\in\mathcal{\mathbb{R}}^{n\times d}$
indicate label and feature input respectively. $\bm{\gamma}=[\bm{\gamma}^{\top}_{i}]^{\top}\in\mathcal{\mathbb{R}}^{n\times c}$
is the incidental matrix.
$\lambda$ is the coefficient of the penalty term  $R\left(\cdot \right)$.
To solve Eq.~\eqref{eq:loss_func}, we find the derivative with respect to $\bm{\beta}$ and make it equal to $0$, then we have
\begin{equation}
\hat{\bb}=\left(\bm{X}^{\top}\bm{X}\right)^{\dagger}\bm{X}^{\top}\left(\bm{Y}-\bg\right)\label{eq:beta},
\end{equation}
\noindent where $\left(\cdot\right)^{\dagger}$ denotes the Moore-Penrose pseudo-inverse. 
Note that 
(i) we are interested in utilizing $\bg$
to measure the credibility of each instance along its regularization path, rather than estimating
$\hat{\bb}$, since the linear regression model is not good enough
for classification in general;
(ii) the $\hat{\bb}$ also relies
on the estimation of $\bg$. 
To this end, we take Eq.~\eqref{eq:beta}
into Eq.~\eqref{eq:loss_func} and solve the problem
as
\begin{equation}
\underset{\bg\in\mathbb{R}^{n\times c}}{\mathrm{argmin}}\frac{1}{2}\left\Vert \bm{Y}-\bm{H}\left(\bm{Y}-\bg\right)-\bg\right\Vert _{\operatorname{F}}^{2}+\lambda R\left(\bg\right),
\end{equation}
where 
$\bH=\bm{X}\left(\bm{X}^{\top}\bm{X}\right)^{\dagger}\bm{X}^{\top}$.
We further define $\tilde{\bm{X}}=\bm{I}-\bm{H}$ and $\tilde{\bm{Y}}=\tilde{\bm{X}}\bm{Y}$.
Then the above equation can be simplified as
\begin{equation}
\hat{\bm{\gamma}} = \underset{\bg\in\mathbb{R}^{n\times c}}{\mathrm{argmin}}\frac{1}{2}\left\Vert \tilde{\bm{Y}}-\tilde{\bm{X}}\bg\right\Vert _{\operatorname{F}}^{2}+\lambda R\left(\bg\right),\label{eq:penalty}
\end{equation}
which is a multi-response regression problem. 

Particularly, we regard $\hat{\bm{\gamma}}$ as a function of $\lambda$. 
When $\lambda$ changes from $0$ to $\infty$, the sparsity of $\hat{\bm{\gamma}}$ is increased until all of its elements are forced to vanish.
Further, we use the penalty $R\left(\bg\right)$ to encourage $\bg$ vanishes row by row, \ie, instance by instance. 
For example, $R\left(\bg\right)=\sum_{i=1}^n\sum_{j=1}^c\left|\bg_{i,j}\right|$ or $R\left(\bg\right)=\sum_{i=1}^n\left\Vert\bg_{i}\right\Vert_2$.
Moreover, the penalty \revise{tends to} vanish the subset of $\tilde{X}$ with the lowest deviations, indicating less discrepancy between the prediction and the ground truth.
Hence we could rank the pseudo-labeled data by the \emph{smallest} $\lambda$ value when the corresponding $\hat{\gamma}_i$ vanishes. 
As shown in one toy example of Figure~\ref{fig:illu}, the $\hat{\bm{\gamma}}$ value of the instance denoted by the red line vanishes first, and thus it is the most trustworthy sample by our algorithm. 

We seek the best subset by checking the regularization path, \ie~$\hat{\bm{\gamma}}(\lambda)$ as $\lambda$ varies, 
which can be easily configured by 
a block coordinate descent algorithm
implemented in Glmnet~\cite{simon2013blockwise}. 
Specifically,
we can find $\lambda_{max}=\underset{i}{\max}\left\Vert\tilde{\bm{X}}_{\cdot i}^{\top}\tilde{\bm{Y}}\right\Vert _{2}/n$
to guarantee \revise{that the solution of Eq.~\eqref{eq:penalty} all equals to 0}.
Then we can get a list of $\lambda$s from $0$ to $\lambda_{max}$. 
We solve a specific Eq.~\eqref{eq:penalty} with each $\lambda$,
and get the regularization path of $\bg$ along the way.

\begin{figure}
\begin{centering}
\includegraphics[width=0.8\columnwidth]{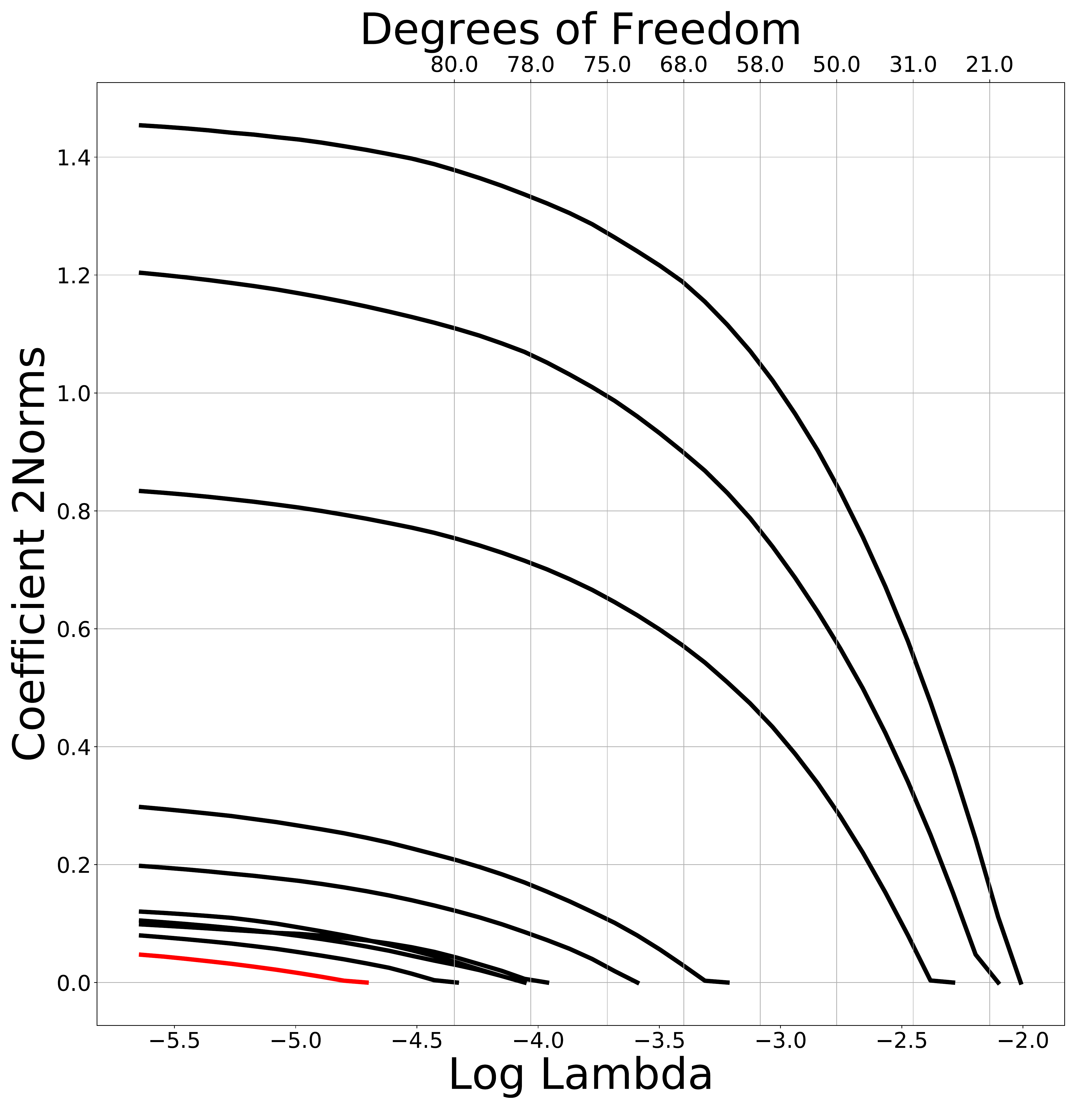}
\par\end{centering}
\caption{\label{fig:illu}
Regularization path of $\lambda$ on ten samples. Red line is corresponding to the most trustworthy sample suggested by our ICI algorithm.}
\end{figure}

\subsection{Extension to logistic regression\label{sec:extension-lr}}
\begin{figure*}[!ht]
\includegraphics[width=1\textwidth]{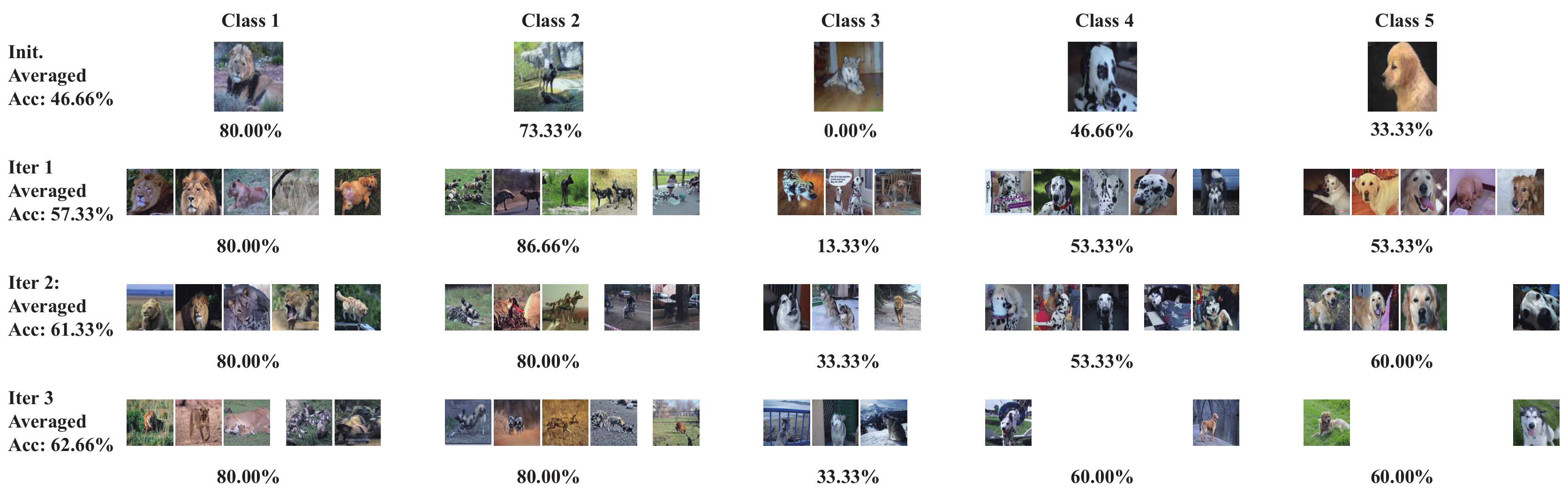} 
\caption{\label{fig:qualitative-images}
\revise{New images selected per class in each iteration of an inference episode on \textit{mini}ImageNet.
The averaged test accuracy is on the left, while the test accuracy of each class is listed at the bottom of the corresponding images in each iteration.
In each iteration, the correctly-predicted instances of each class are placed on the left, and vice versa on the right.
For each class, we select $5$ images at most.
Note that in some iteration the number of the left unlabeled instances of classes is smaller than $5$.
The remaining images are incorrectly predicted in the other classes.
}
}
\end{figure*}

In the above section, we develop ICI with a linear regression model.
But the basic idea of measuring credibility of pseudo-labeled instance as the sparsity level of the corresponding incidental parameters along the regularization path is general and not limited in the linear regression model.
To show this, in this section we extend ICI with generalized linear models, particularly, the logistic regression model.

Recall that we have 
$\bm{Y}=[\bm{y}_{i}^{\top}]^{\top}\in\mathcal{\mathbb{R}}^{n\times c}$ 
and
$\bm{X}=[\bm{x}_{i}]^{\top}\in\mathcal{\mathbb{R}}^{n\times d}$
as our label matrix and feature matrix,
respectively.
We use 
$\bm{\beta}^{*}\in\mathcal{\mathbb{R}}^{d\times c}$ as the coefficient matrix and
$\bm{\gamma}^{*}=\left[\bm{\gamma}_{i}\right]\in\mathcal{\mathbb{R}}^{n\times c}$
as the incidental matrix.
Then our logistic model with incidental parameters can be formed as
\begin{equation}
\bY_{i,c} = \frac{\exp \left(\bm{X}_{i\cdot}\bm{\beta}^{*}_{\cdot c}+\bm{\gamma}^{*}_{i,c}\right)}{\sum_{l=1}^C\exp \left(\bm{X}_{i\cdot}\bm{\beta}^{*}_{\cdot l}+\bm{\gamma}^{*}_{i,l}\right)}+\be_{i,c}.
\label{eq:logit-origin}
\end{equation}
This could be reformulated into a standard logistic regression model with sparsity regularization.
Specifically, we define 
$\bar{\bm{X}}=\left(\bm{X}, \bm{I}\right)\in\mathbb{R}^{n\times(d + n)}$ and $\bar{\bm{\beta}}^{*}=\left(\bm{\beta}^{*}, \bm{\gamma}^{*}\right)^{\top}\in\mathbb{R}^{(d + n)\times c}$, in which $\bm{I}$ is the identity matrix.
Then we have
\begin{equation}
\bar{\bm{X}}_{i\cdot}\bar{\bm{\beta}}^{*}_{\cdot c}=
\left(\bm{X}_{i\cdot}, \bm{I}_{i\cdot}\right)
\left(\bm{\beta}^{*}_{\cdot c}, \bm{\gamma}^{*}_{\cdot c}\right)^{\top}=
\bm{X}_{i\cdot}\bm{\beta}^{*}_{\cdot c}+\bm{\gamma}^{*}_{i,c}.
\end{equation}
Hence we could reformulate Eq.~\eqref{eq:logit-origin} as
\begin{equation}
\bY_{i,c} = \frac{\exp \left(\bar{\bm{X}}_{i\cdot}\bar{\bm{\beta}}^{*}_{\cdot c}\right)}
{\sum_{l=1}^C\exp \left(\bar{\bm{X}}_{i\cdot}\bar{\bm{\beta}}^{*}_{\cdot l}\right)}+\be_{i,c},
\label{eq:logit-reformualted}
\end{equation}
which is exactly a logistic regression model. Our objective is the penalized negative log-likelihood function:
\begin{equation}
\label{eq:logistic-regression}
\begin{aligned}
\underset{\bar{\bm{\beta}}=\left(\bm{\beta}, \bm{\gamma}\right)^{\top}}{\mathrm{argmin}} &-
\frac{1}{n}\sum_{i=1}^n \left(\sum_{l=1}^{c}\bY_{i,l}\left(\bar{\bm{X}}_{i,\cdot}\bar{\bb}_{\cdot,l}\right)
-\log \left(\sum_{l=1}^{c}e^{ \bar{\bm{X}}_{i,\cdot}\bar{\bb}_{\cdot,l}}\right)\right)
\\
&+\lambda_1 R\left(\bb\right) + \lambda_2 R\left(\bg\right).
\end{aligned}    
\end{equation}
The algorithm for solving Eq.~\eqref{eq:logistic-regression} is well established~\cite{zhu1997algorithm,fan2008liblinear,yu2011dual,simon2013blockwise}.
Note that unlike the linear regression version where we can calculate a closed-form solution for $\bb$, 
here the penalty of $\bb$ is necessary or we will not achieve a unique solution,
\ie~the solution is ill-posed~\cite{tikhonov1977solutions}.
For example, assume that we have a large enough $\lambda_2$ to vanish all elements of $\bg$.
Then the problem degenerates to the normal logistic regression with the coefficient $\bb$.
Suppose we have an optimal solution $\bb^*$, 
and we replace the $k$-th row $\bb^*_{k,\cdot}$ by $\bb^*_{k,\cdot}+\varepsilon \bm{1}^\top$ where $\varepsilon$ is some scalar. 
Then we have
\begin{equation}
\hat{\bY}_{i,l\mid_{\bb^{*}_{k,\cdot}+\varepsilon \bm{1}^\top}}
=\frac{e^{\bm{X}_{i\cdot}\bb^{*}_{\cdot c}+x_{i,k}\varepsilon}}
{\sum_{l=1}^Ce^{\bm{X}_{i\cdot}\bm{\beta}^*_{\cdot l}+x_{i,k}\varepsilon}}
=\frac{e^{\bm{X}_{i\cdot}\bm{\beta}^*_{\cdot c}}}
{\sum_{l=1}^Ce^{\bm{X}_{i\cdot}\bm{\beta}^*_{\cdot l}}}
=\hat{\bY}_{i,l\mid_{\bb^{*}_{k,\cdot}}}
\end{equation}
Hence, to get a unique solution, we must provide some penalty on $\bb$.

We use a partial Newton algorithm~\cite{simon2013blockwise} to solve this optimization problem.
Similar to the linear regression model, we use a list of $\lambda$s to calculate the regularization path of $\bg$. 

\begin{algorithm}
\textbf{Input}:
support data$\left\{ \left(\bX_{i},\by_{i}\right)\right\} _{i=1}^{c\times s}$, query data $\bX_{t}=\left\{ \bX_{j}\right\} _{j=1}^{M}$, unlabeled data $\bX_{u}=\left\{ \bX_{k}\right\} _{k=1}^{U}$

\textbf{Initialization}: support set $\left(\bX_{s},\bY_{s}\right)=\left\{ \left(\bX_{i},\by_{i}\right)\right\} _{i=1}^{c\times s}$, feature matrix $\bX_{c\times s+U,d}=\left[\bX_{s};\bX_{u}\right]$, classifier

\textbf{Repeat:}

Train classifier using $\left(\bX_{s},\bY_{s}\right)$;

Get pseudo-label $\bY_u$ for $\bX_u$ by classifier;

Rank $\left(\bX,\bY\right)=\left(\bX,[\bY_s;\bY_u]\right)$ by ICI;

Select a subset $\left(\bX_{\mathrm{sub}},\bY_{\mathrm{sub}}\right)$ into $\left(\bX_{s},\bY_{s}\right)$;

\textbf{Until Converged.}

\textbf{Inference:}

Train classifier using $\left(\bX_{s},\bY_{s}\right)$;

Get pseudo-label $\bY_t$ for $\bX_t$ by classifier;

\textbf{Output}: inference labels $\bY_{t}=\left\{ \hat{\by}_{j}\right\} _{j=1}^{M}$

\caption{\label{alg:Inference-process.}Inference process of our algorithm.}
\end{algorithm}

\subsection{Self-taught learning with ICI}

The proposed ICI can thus be easily integrated to improve the self-taught learning algorithm. Particularly, the initialized classifier can predict the pseudo-labels of unlabeled instances; and we further employ the ICI algorithm to select the most confident subset of unlabeled instances, to update the classifier. The whole algorithm can be iteratively updated, as summarized in Algorithm~\ref{alg:Inference-process.}. 
We also show a qualitative result in an inference episode in Fig.~\ref{fig:qualitative-images}.

Intuitively, ICI focuses on fitting a line using the observations $\left(\bm{x}_{i},\bm{y}_{i}\right)_{i=1}^{n}$ which contains outliers.
\revise{Starting} from the labeled instances, we search the most possible inliers from the pseudo-labeled instances in each iteration.
When we solve the line along the regularization path (from $\lambda_{max}$ to $\lambda_{min}$), the estimated line will approach the more linear-separable subset, resulting in $\left\Vert\bg_i\right\Vert=0$ for instances in this subset while $\left\Vert\bg_i\right\Vert>0$ for others.
Then we could use the linear-separable subset to improve the linear classifier.
Furthermore, the fitted line cannot provide the right label for those outliers, hence the re-train process and re-infer process are essential to transfer outliers to inliers.

\section{Identifiability of ICI}
In this part, we provide a theory for identifiability of ICI with linear regression model.
Our theory is based on the model selection consistency for a linear regression with $\ell_1$-sparsity regularization ~\cite{zhao2006model,wainwright2009sharp}.
Here our purpose is to answer the question of 
\textit{under which conditions can we find the right-predicted instances}?

Recall that our intuition is that $\bg_{i,j}$ can be regarded as the correction of the chance that instance $i$ belonging to class $j$.
Suppose $\bg^*$ is the ground truth.
If the pseudo-labeled instance $i$ is right-predicted, 
then we have $\bg^*_{i,j}=0,\forall  j \in \left\{ 1,\ldots,c\right\}$.
On the contrary,
if the instance is wrongly predicted, then we should have
$\bg^*_{i,j}\neq0$ for some $j$.

We start with reformulating the derivation process from Eq.~\eqref{eq:loss_func} to Eq.~\eqref{eq:penalty} by another decoupled representation of solving $\bb$ and $\bg$.
\minor{
Recall that the linear regression model with incidental parameters is
\begin{equation}
\bm{Y}=\bm{X}\bm{\beta}^{*}+\bm{\gamma}^{*}+\bm{\varepsilon},
\end{equation}
where $\bm{Y}\in\left\{ 0,1\right\} ^{n\times c},\bm{X}\in\mathbb{R}^{n\times d},\bm{\beta}^{*}\in\mathbb{R}^{d\times c},\bm{\gamma}^{*}\in\mathbb{R}^{n\times c},\bm{\varepsilon}\in\mathbb{R}^{n\times c}$.
We are solving the problem of 
\begin{equation}
\underset{\bm{\beta},\bm{\gamma}}{\mathrm{argmin}}\frac{1}{2}\left\Vert \bm{Y}-\bm{X}\bm{\beta}-\bm{\gamma}\right\Vert _{\mathrm{F}}^{2}+\lambda\sum_{i=1}^{n}\sum_{j=1}^{c}\left|\gamma_{i,j}\right|.
\end{equation}
With this formulation, one could vectorize the problem and transfer it into the single-response regression case. 
Denote the vectorization operator for $\bm{A}\in\mathbb{R}^{m\times n}$ as $\mathrm{vec}\left(\bm{A}\right)\coloneqq\left(a_{1,1},\ldots,a_{m,1},a_{1,2},\ldots,a_{m,2},\ldots,a_{1,n},\ldots,a_{m,n}\right)^{\top}$,
then
\begin{equation}
\mathrm{vec}\left(\bm{Y}\right)=\left(\bm{I}_{c}\otimes\bm{X}\right)\mathrm{vec}\left(\bm{\beta}^{*}\right)+\mathrm{vec}\left(\bm{\gamma}^{*}\right)+\mathrm{vec}\left(\bm{\varepsilon}\right),
\end{equation}
where $\otimes$ is the Kronecker product operator. We denote $\vec{\bm{y}}=\mathrm{vec}\left(\bm{Y}\right)\in\left\{ 0,1\right\} ^{nc},\bm{X}_{\otimes}=\left(\bm{I}_{c}\otimes\bm{X}\right)\in\mathbb{R}^{nc\times dc},\vec{\bm{\beta}}=\mathrm{vec}\left(\bm{\beta}\right)\in\mathbb{R}^{dc},\vec{\bm{\gamma}}=\mathrm{vec}\left(\bm{\gamma}\right)\in\mathbb{R}^{nc},\vec{\bm{\varepsilon}}=\mathrm{vec}\left(\bm{\varepsilon}\right)\in\mathbb{R}^{nc}$.
We are now solving the problem of 
\begin{equation}
\underset{\vec{\bm{\beta}},\vec{\bm{\gamma}}}{\mathrm{argmin}}\frac{1}{2}\left\Vert \vec{\bm{y}}-\bm{X}_{\otimes}\vec{\bm{\beta}}-\vec{\bm{\gamma}}\right\Vert _{\mathrm{2}}^{2}+\lambda\left\Vert \vec{\bm{\gamma}}\right\Vert _{1}.
\end{equation}
}
We conduct the singular vector decomposition of $\bX_{\otimes}$ as $\bX_{\otimes}=\bm{U}\bm{\Sigma} \bm{V}^{\top}$,
where $\bm{U}\in\mathbb{R}^{nc\times nc},\ \bm{\Sigma}\in\mathbb{R}^{nc\times dc},\ \bm{V}\in\mathbb{R}^{dc\times dc}$.
Recall that $d$ is set as the reduced dimension from the original feature, hence we have $d\ll n$. 
Thus we could divide $\bm{U}$ into $\bm{U}=\left[\bm{U}_1,\bm{U}_2\right]$ where 
$\bm{U}_1$ is an orthogonal basis of the column space of $\bX_{\otimes}$.
Then we have $\bU^\top\bU=\bU\bU^\top=\bm{I}$ and $\bU_2^\top \bX_{\otimes}=0$.
Hence
\begin{equation}
\label{eq:thm-loss}
\begin{aligned}
L\coloneqq&\left\Vert\vec{\bm{y}}-\bX_{\otimes}\vec{\bm{\beta}}-\vec{\bg}\right\Vert _{2}^{2}
=\left\Vert\bU^\top\left(\vec{\bm{y}}-\bX_{\otimes}\vec{\bm{\beta}}-\vec{\bg}\right)\right\Vert _{2}^{2}\\
=&\left\Vert\bU_1^\top\vec{\bm{y}}-\bU_1^\top\bX_{\otimes}\vec{\bm{\beta}}-\bU_1^\top\vec{\bg}\right\Vert _{2}^{2}
+\left\Vert\bU_2^\top\vec{\bm{y}}-\bU_2^\top\vec{\bg}\right\Vert _{2}^{2}.
\end{aligned}
\end{equation}
Again, we find the derivative with respect to $\vec{\bb}$ and make it equal to 0,
then we have
\begin{equation}
\hat{\vec{\bb}}=\left(\bX_{\otimes}^{\top}\bX_{\otimes}\right)^{\dagger}\bX_{\otimes}^{\top}\left(\vec{\bm{y}}-\vec{\bg}\right).
\label{eq:thm-beta}
\end{equation}
\revise{
Note that since $\partial L/\partial\minor{\hat{\vec{\bm{\beta}}}}=0$, we have 
\begin{equation}
\bX_{\otimes}^{\top}\bm{U}_{1}\left(\bm{U}_{1}^{\top}\vec{\bm{y}}-\bm{U}_{1}^{\top}\bX_{\otimes}\minor{\hat{\vec{\bm{\beta}}}}-\bm{U}_{1}^{\top}\vec{\bg}\right)=0.
\end{equation}
Denote $\mathrm{rank}\left(\bX_{\otimes}\right)=k$, then we have $\bX_{\otimes}^{\top}\bm{U}_{1}\in\mathbb{R}^{dc\times k}$,
$\bm{U}_{1}^{\top}\vec{\bm{y}}-\bm{U}_{1}^{\top}\bX_{\otimes}\minor{\hat{\vec{\bm{\beta}}}}-\bm{U}_{1}^{\top}\vec{\bg}\in\mathbb{R}^{k\times 1}$
and $\mathrm{rank}\left(\bX_{\otimes}^{\top}\bm{U}_{1}\right)=k$ by definition. 
Using Sylvester\textquoteright s rank inequality, we have
\begin{equation}
\begin{aligned}
&\mathrm{rank}\left(\bX_{\otimes}^{\top}\bm{U}_{1}\right)+\mathrm{rank}\left(\bm{U}_{1}^{\top}\vec{\bm{y}}-\bm{U}_{1}^{\top}\bX_{\otimes}\minor{\hat{\vec{\bm{\beta}}}}-\bm{U}_{1}^{\top}\vec{\bg}\right)-k\\
\leq&\mathrm{rank}\left(\bX_{\otimes}^{\top}\bm{U}_{1}\left(\bm{U}_{1}^{\top}\vec{\bm{y}}-\bm{U}_{1}^{\top}\bX_{\otimes}\minor{\hat{\vec{\bm{\beta}}}}-\bm{U}_{1}^{\top}\vec{\bg}\right)\right)=0.
\end{aligned}
\end{equation}
Hence 
\begin{equation}
\mathrm{rank}\left(\bm{U}_{1}^{\top}\vec{\by}-\bm{U}_{1}^{\top}\bX_{\otimes}\minor{\hat{\vec{\bm{\beta}}}}-\bm{U}_{1}^{\top}\vec{\bg}\right)=0.
\end{equation}
Hence the first term of $L$ equals to $0$. 
Now we are solving the problem of
}
\begin{equation}
L\left(\vec{\bg}\right)=\left\Vert\bU_2^\top\vec{\bm{y}}-\bU_2^\top\vec{\bg}\right\Vert _{2}^{2}+\lambda \left\Vert \vec{\bm{\gamma}}\right\Vert _{1}.
\label{eq:another-penalty}
\end{equation}
Eq.~\eqref{eq:another-penalty} is equivalent to Eq.~\eqref{eq:penalty} but provides another interpretation that the incidental parameters (with a projection) try to find a sparse approximation of $\bU_2^\top\vec{\bm{y}}$.
Based on this, we could provide the answer of \textit{under which condition could we recover the true support set of $\vec{\bg}$}? 

Formally, let $S=\mathrm{supp}\left(\vec{\bg}^*\right)$ and $\hat{S}=\mathrm{supp}\left(\hat{\vec{\bg}}\right)$, 
where $\vec{\bg}^*$ is the 
\revise{ground-truth} prediction error,
$\hat{\vec{\bg}}$ is the estimator provided by our algorithm
\minor{
and $\mathrm{supp}\left(\vec{\bg}\right)=\{i\mid\vec{\bg}_{i}\neq 0\}$.
Recall that our goal is to find the wrongly predicted instances.
Hence we further define a ground-truth wrongly-predicted set $O=\left\{ i\vert\gamma_{i,j}^{*}\neq0,\textrm{ for some }j\in\left[c\right]\right\}$ and the estimator  $\hat{O}=\left\{ i\vert\hat{\gamma}_{i,j}\neq0,\textrm{ for some }j\in\left[c\right]\right\}$.
}
For simplicity,
\revise{
we denote $\vec{\bm{y}}_{u}=\bU_2^\top \vec{\bm{y}}$ and $\tilde{\bU}=\bU_2^\top$.
}
Furthermore,
denote $\tilde{\bU}_S$ ($\tilde{\bU}_{S^c}$) as the column vectors of $\tilde{\bU}$ whose index are in $S$ ($S^c$), 
respectively.
We are solving the problem of 
\begin{equation}
\label{eq:thm-problem}
\min_{\vec{\bg}} \left\Vert\vec{\bm{y}}_{u}-\tilde{\bU}\vec{\bg}\right\Vert _{2}^{2}+\lambda \left\Vert \vec{\bm{\gamma}}\right\Vert _{1},
\end{equation}
\minor{
Recall that the linear regression model indicates that 
for ground-truth values $\vec{\bm{\beta}}^{*},\vec{\bm{\gamma}^{*}}$
\begin{equation}
\vec{\bm{y}}=\bm{X}_{\otimes}\vec{\bm{\beta}}^{*}+\vec{\bm{\gamma}}^{*}+\vec{\bm{\varepsilon}},
\end{equation}
and hence
\begin{equation}
\tilde{\bU}\vec{\bm{y}}=\tilde{\bU}\left(\bm{X}_{\otimes}\vec{\bm{\beta}}^{*}+\vec{\bm{\gamma}}^{*}+\vec{\bm{\varepsilon}}\right).
\end{equation}
Hence we have 
\begin{equation} 
\vec{\bm{y}}_{u}=\tilde{\bU}\vec{\bm{y}}=\tilde{\bU}\vec{\bm{\gamma}}^{*}+\tilde{\bU}\vec{\bm{\varepsilon}}=\tilde{\bm{U}}_{S}\vec{\bm{\gamma}}_{S}^{*}+\tilde{\bU}\vec{\bm{\varepsilon}},\label{eq:y-ground-truth}
\end{equation}
where $\vec{\bm{\varepsilon}}$ is the sub-Gaussian noise assumed in the
linear regression model.
}
Further let $\mu_{\tilde{\bU}}=\underset{i\in S^{c}}{\max}\left\Vert \tilde{\bU}_{i}\right\Vert _{2}^{2}$.
We give three assumptions:

\noindent(C1: Restricted eigenvalue)
\begin{equation}
\lambda_{\min }\left(\tilde{\bU}_{S}^{\top} \tilde{\bU}_{S}\right)=C_{\min }>0.
\end{equation}
(C2: Irrepresentability) $\exists\ \eta\in\left(0,1\right]$,
\begin{equation}
\left\|\tilde{\bU}_{S^{c}}^{\top} \tilde{\bU}_{S}\left(\tilde{\bU}_{S}^{\top} \tilde{\bU}_{S}\right)^{-1}  \right\|_{\infty} \leq 1-\eta.
\end{equation}
(C3: Large error)
\begin{equation}
\vec{\bg}_{\min }:=\min _{i \in S}\left|\vec{\bg}_{i}^{*}\right|>  h\left(\lambda, \eta, \tilde{\bU}, \vec{\bg}^{*}\right),
\end{equation}
where
\begin{equation}
h\left(\lambda, \eta, \tilde{\bU}, \vec{\bg}^{*}\right)=\frac{\lambda\eta}{\sqrt{C_{\min } \mu_{\tilde{\bU}}}}+\lambda\left\|\left(\tilde{\bU}_{S}^{\top} \tilde{\bU}_{S}\right)^{-1} \operatorname{sign}\left(\vec{\bg}_{S}^{*}\right)\right\|_{\infty}
\end{equation}
\revise{and $\left\Vert \bm{A}\right\Vert_{\infty}\coloneqq\max_{i}\sum_j\left|A_{i,j}\right|$.}
Based on these conditions, we could provide the following theorem:
\begin{thm}[\minor{Identifiability of ICI}]
\label{thm:sufficiency}
Let 
\begin{equation*}
\lambda \geq \frac{2 \sigma \sqrt{\mu_{\tilde{\bU}}}}{\eta } \sqrt{ \log cn}.
\end{equation*}
Then with probability greater than
\begin{equation*}
1-2 cn \exp \left\{-\frac{\lambda^{2} \eta^{2}}{2  \sigma^{2} \mu_{\tilde{\bU}}}\right\} \geq 1-2 \left(cn\right)^{-1},
\end{equation*}
Eq.~\eqref{eq:thm-problem} has a unique solution $\hat{\bg}$ satisfies the following properties:
\begin{enumerate}
\item 
If C1 and C2 hold, the wrong-predicted instances indicated by ICI has no false positive error, i.e. 
$\hat{S}\subseteq S$
\minor{and hence $\hat{O}\subseteq O$}
, and
\begin{equation*}
\left\|\hat{\vec{\bg}}_{S}-\vec{\bg}_{S}^{*}\right\|_{\infty} \leq  h\left(\lambda, \eta, \tilde{\bU}, \vec{\bg}^{*}\right);
\end{equation*}
\item
If C1, C2, and C3 hold, ICI will identify all the correctly-predicted instances, i.e. $\hat{S}= S$ \minor{and hence $\hat{O} = O$}~(in fact~$\mathrm{sign} \left(\hat{\vec{\bg}}\right)=\mathrm{sign} \left(\vec{\bg}^*\right)$).
\end{enumerate}
\end{thm}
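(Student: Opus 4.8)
The plan is to read Eq.~\eqref{eq:thm-problem} as an $\ell_1$-penalized least-squares (Lasso) problem with design matrix $\tilde{\bU}$, response $\vec{\bm{y}}_u$, and unknown $\vec{\bg}$ whose true support is $S$, and then to run the \emph{primal--dual witness} argument of Wainwright~\cite{wainwright2009sharp}. I would first write the KKT condition: $\hat{\vec{\bg}}$ is optimal if and only if there is a subgradient $\vec{z}\in\partial\|\hat{\vec{\bg}}\|_1$ with
\begin{equation*}
\tilde{\bU}^{\top}\big(\tilde{\bU}\hat{\vec{\bg}}-\vec{\bm{y}}_{u}\big)+\lambda\vec{z}=0,\qquad \vec{z}_{i}=\operatorname{sign}(\hat{\vec{\bg}}_{i})\ \text{if}\ \hat{\vec{\bg}}_{i}\neq0,\ |\vec{z}_{i}|\le1\ \text{otherwise},
\end{equation*}
where I absorb the factor from the quadratic term into $\lambda$. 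Throughout I would use the identity $\tilde{\bU}\tilde{\bU}^{\top}=\bm{I}$, which holds because $\tilde{\bU}=\bm{U}_2^{\top}$ and $\bm{U}_2$ has orthonormal columns; this makes $\tilde{\bU}\vec{\bm{\varepsilon}}$ an isotropic (sub-Gaussian) noise vector.

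Next I would build the witness pair. Fix $\hat{\vec{\bg}}_{S^c}=0$, let $\hat{\vec{\bg}}_S$ solve the Lasso restricted to the columns in $S$ (well defined since $\tilde{\bU}_S^{\top}\tilde{\bU}_S$ is invertible by C1), set $\vec{z}_S=\operatorname{sign}(\hat{\vec{\bg}}_S)$, and read $\vec{z}_{S^c}$ off the $S^c$ block of the KKT system. Substituting the model $\vec{\bm{y}}_u=\tilde{\bU}_S\vec{\bg}_S^{*}+\tilde{\bU}\vec{\bm{\varepsilon}}$ from Eq.~\eqref{eq:y-ground-truth} yields the closed forms
\begin{align*}
\hat{\vec{\bg}}_S-\vec{\bg}_S^{*}&=\big(\tilde{\bU}_S^{\top}\tilde{\bU}_S\big)^{-1}\big(\tilde{\bU}_S^{\top}\tilde{\bU}\vec{\bm{\varepsilon}}-\lambda\vec{z}_S\big),\\
\vec{z}_{S^c}&=\tilde{\bU}_{S^c}^{\top}\tilde{\bU}_S\big(\tilde{\bU}_S^{\top}\tilde{\bU}_S\big)^{-1}\vec{z}_S+\tfrac{1}{\lambda}\tilde{\bU}_{S^c}^{\top}(\bm{I}-\Pi_S)\tilde{\bU}\vec{\bm{\varepsilon}},
\end{align*}
with $\Pi_S=\tilde{\bU}_S(\tilde{\bU}_S^{\top}\tilde{\bU}_S)^{-1}\tilde{\bU}_S^{\top}$. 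The remainder of the proof is controlling the two noise terms on a single high-probability event.

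The crux is \emph{strict dual feasibility}, $\|\vec{z}_{S^c}\|_{\infty}<1$. The deterministic term is at most $1-\eta$ by the irrepresentability condition C2, since $\|\vec{z}_S\|_\infty=1$. For the stochastic term, each coordinate $i\in S^c$ is a sub-Gaussian linear functional of $\vec{\bm{\varepsilon}}$ whose variance proxy is $\sigma^{2}\|(\bm{I}-\Pi_S)\tilde{\bU}_i\|_2^{2}\le\sigma^{2}\|\tilde{\bU}_i\|_2^{2}\le\sigma^{2}\mu_{\tilde{\bU}}$ (using $\tilde{\bU}\tilde{\bU}^{\top}=\bm{I}$ and that $\Pi_S$ is an orthogonal projection). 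A sub-Gaussian tail bound at level $\lambda\eta$ together with a union bound over the at most $cn$ coordinates gives $\|\tfrac{1}{\lambda}\tilde{\bU}_{S^c}^{\top}(\bm{I}-\Pi_S)\tilde{\bU}\vec{\bm{\varepsilon}}\|_{\infty}<\eta$ with probability at least $1-2cn\exp\{-\lambda^{2}\eta^{2}/(2\sigma^{2}\mu_{\tilde{\bU}})\}$, so that $\|\vec{z}_{S^c}\|_\infty<(1-\eta)+\eta=1$; the threshold $\lambda\ge\frac{2\sigma\sqrt{\mu_{\tilde{\bU}}}}{\eta}\sqrt{\log cn}$ converts this into the stated $1-2(cn)^{-1}$. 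On the same event I would bound $\|\hat{\vec{\bg}}_S-\vec{\bg}_S^{*}\|_\infty$: the signed term contributes $\lambda\|(\tilde{\bU}_S^{\top}\tilde{\bU}_S)^{-1}\operatorname{sign}(\vec{\bg}_S^{*})\|_\infty$, while each on-support noise coordinate has variance proxy $\le\sigma^{2}/C_{\min}$ (because $\|(\tilde{\bU}_S^{\top}\tilde{\bU}_S)^{-1}\|_{\mathrm{op}}=1/C_{\min}$ by C1), which under the same $\lambda$ produces $\frac{\lambda\eta}{\sqrt{C_{\min}\mu_{\tilde{\bU}}}}$; together these are exactly $h(\lambda,\eta,\tilde{\bU},\vec{\bg}^{*})$. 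Strict dual feasibility plus C1 then invokes the standard Lasso uniqueness lemma to certify that the genuine minimizer equals the witness, so $\hat{\vec{\bg}}_{S^c}=0$, i.e.\ $\hat{S}\subseteq S$ (hence $\hat{O}\subseteq O$), which is Part~1.

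Finally, for Part~2 I would invoke the large-error condition C3: since $\vec{\bg}_{\min}>h\ge\|\hat{\vec{\bg}}_S-\vec{\bg}_S^{*}\|_\infty$, no entry of $\hat{\vec{\bg}}_S$ can vanish or change sign, so $\operatorname{sign}(\hat{\vec{\bg}}_S)=\operatorname{sign}(\vec{\bg}_S^{*})$, giving $\hat{S}=S$ (hence $\hat{O}=O$) and full sign recovery; this also retroactively justifies the witness choice $\vec{z}_S=\operatorname{sign}(\vec{\bg}_S^{*})$. I expect the main obstacle to be the strict-dual-feasibility step: one must keep the projected-noise maximum strictly below $\eta$ while simultaneously making the two noise controls produce \emph{exactly} the two summands of $h$, so the interplay between the variance proxies $\mu_{\tilde{\bU}}$ and $1/C_{\min}$, the tail/union bound, and the lower threshold on $\lambda$ has to be matched precisely. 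A secondary subtlety is verifying that the uniqueness lemma still applies even though $\tilde{\bU}$ is column-rank-deficient off $S$, which is where C1 together with strict feasibility does the work.
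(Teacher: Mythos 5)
Your proposal is correct and follows essentially the same route as the paper's proof: the paper likewise runs the primal--dual witness argument of Wainwright (its Proposition~\ref{prop} is your KKT/strict-dual-feasibility step, using the identity $\tilde{\bU}\tilde{\bU}^{\top}=\bm{I}$ and the model $\vec{\bm{y}}_u=\tilde{\bU}_S\vec{\bg}_S^{*}+\tilde{\bU}\vec{\bm{\varepsilon}}$), and its Lemma~\ref{lemma} supplies exactly your two sub-Gaussian tail bounds with variance proxies $\sigma^{2}\mu_{\tilde{\bU}}$ and $\sigma^{2}/C_{\min}$, combined under the same union bound and choice of $\lambda$. Your handling of C2 for strict dual feasibility, C3 for sign recovery via $\vec{\bg}_{\min}>h$, and the deduction $\hat{S}\subseteq S\Rightarrow\hat{O}\subseteq O$ all match the paper's argument.
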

\minor{
\begin{rem*}
Assumption C1 is necessary to ensure that there is a unique $\vec{\bg}^*$ satisfying model \eqref{eq:y-ground-truth}. Assumptions C1-C2 (C1-C3) are sufficient for $\hat{O}\subseteq O$ ($\hat{O} = O$), respectively. They are also necessary in the sense that once violated, there are cases which fail the conclusion with non-vanishing probability.
\end{rem*}
}

The proof is given in the Appendix section \ref{appendix}.
The theorem shows that our algorithm could find the right-predicted pseudo-labeled instances under specific conditions.
Practically,
it may be hard for us to choose a reasonable $\lambda$  to satisfy the three conditions since we could not know $\vec{\bg}_S^*$ in advance.
Specifically, in  the tasks of both semi-supervised and transductive few-shot learning concerned in this paper, one can not assume knowing $\vec{\bg}_S^*$.
Hence, we use the iterative strategy to search along the solution path to select the instances automatically.

\mypar{Effectiveness of the identifiablity in reality.}
\minor{
It is desirable to check to which extent the assumptions hold in reality.
To answer this question, we run 5-way-1-shot TFSL experiments on \emph{mini}ImageNet dataset for 2000 episodes.
}
\begin{figure}[h]
\centering
\includegraphics[width=0.6\columnwidth]{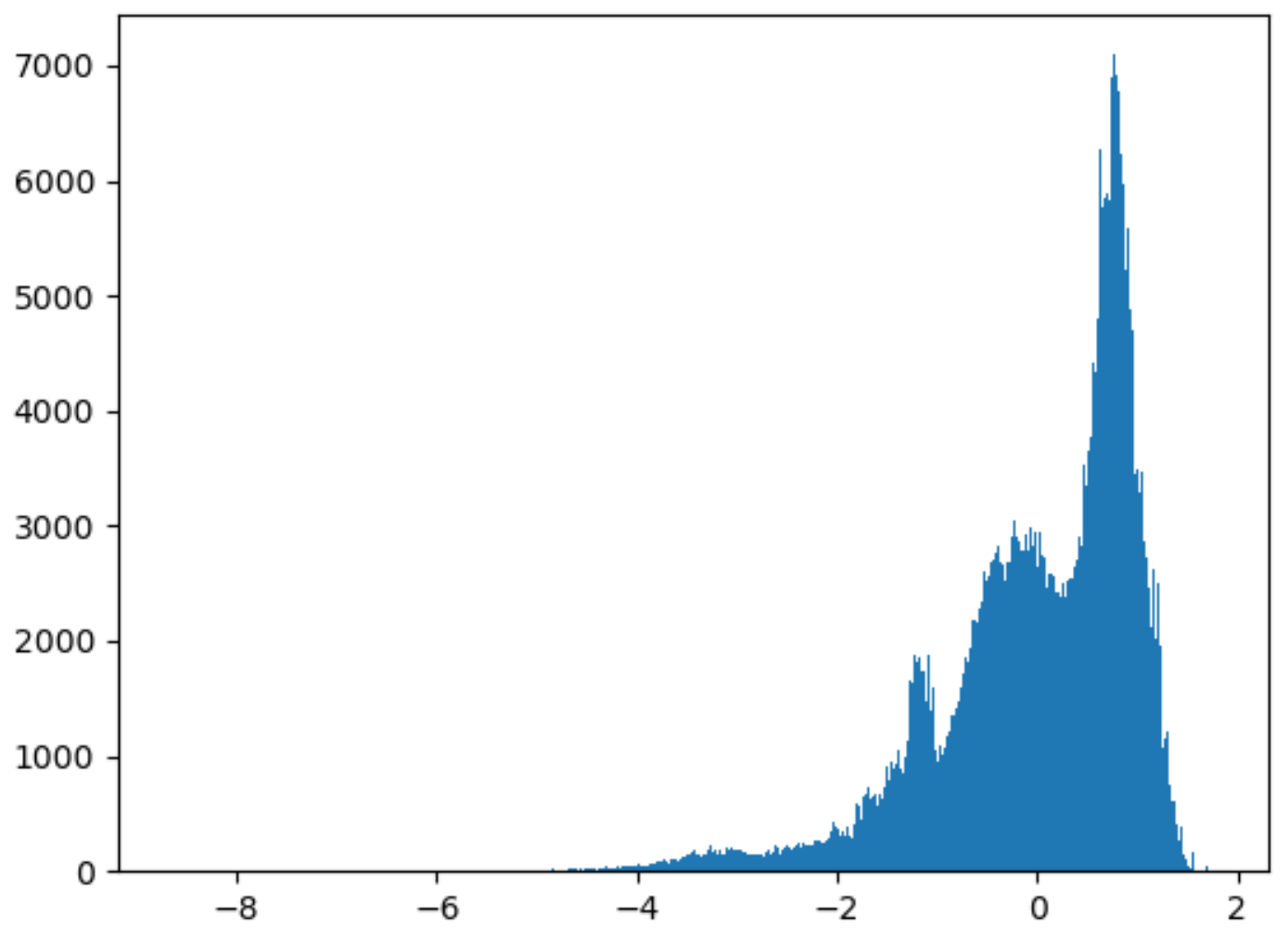} 
\caption{\label{fig:hist-of-error}
Histogram of errors in 2000 episodes. 
The x-axis is the value of errors, while the y-axis is the number of errors.
}
\end{figure}

\mypar{Sub-Gaussian noise.}
\minor{
We collect all the noises in the 2000 episodes and visualize the histogram in Fig.~\ref{fig:hist-of-error}. 
It can be seen that the noise can be approximated by a Gaussian Mixed Model, specifically the sum of three independent Gaussian distribution. 
Hence the noise can be assumed as following sub-Gaussian distribution with bounded variance.
Further, the magnitude of sample mean of the noises is $10^{-19}$, which can be seen as zero mean.
}
\begin{table}[H]
\begin{centering}
\begin{tabular*}{\columnwidth}{@{\extracolsep{\fill}}lcccc}
\toprule 
Satisfied Assumptions & None & C1 & C1 and C2 & All\tabularnewline
\midrule
Improved Episodes & $0$ & $424$ & $1035$ & $40$\tabularnewline
Total Episodes & $0$ & $793$ & $1164$ & $43$\tabularnewline
I/T & $-$ & $53.5\%$ & $88.9\%$ & $93.0\%$\tabularnewline
\bottomrule
\end{tabular*}
\par\end{centering}
\caption{\label{tab:Number-of-episodes}Number of episodes satisfying each
assumption and whether the transductive inference improve the performance.}
\end{table}
\mypar{Assumptions C1-C3.}
\minor{
In each episode, we test whether the assumptions are satisfied and count them in Table \ref{tab:Number-of-episodes}.
We can see that: 
(i) In more than half of the episodes the assumptions C1-C2 are satisfied.
From our theorem, in this case ICI will have no false positive error.
Hence our ICI will reduce the noise of pseudo-labeled instances without eliminating the correctly-predicted instances.
Practically, most of them ($\left(1035+40\right)/\left(1164+43\right)=89.0\%$)
will achieve better performance after transductive inference. 
(ii) When all the assumptions are satisfied, the transductive inference
will get better performance in a high ratio ($93.0\%$).
(iii) Even if C2-C3 are not satisfied, transductive
inference still have the chance of improving the performance ($53.5\%$).
One major reason is that our iterative update strategy will help reduce
the noise.
}

\section{Experiments}
\mypar{Datasets.}
Our experiments are conducted on \revise{four} widely \revise{used} few-shot learning benchmark datasets 
including
\emph{mini}ImageNet~\cite{ravi2016optimization}, \emph{tiered}ImageNet~\cite{ren2018meta}, 
CIFAR-FS~\cite{bertinetto2018metalearning} and 
CUB~\cite{wah2011caltech}.
\textbf{\emph{mini}}\textbf{ImageNet}\footnote{ \revise{https://github.com/gidariss/FewShotWithoutForgetting}} consists of $100$ classes with $600$ labeled instances \revise{per} category.
We follow the split proposed by~\cite{ravi2016optimization}, using $64$ classes as the base set to train the feature extractor, $16$ classes as the validation set, and 
report performance on the novel set which consists of $20$ classes.
\textbf{\emph{tiered}}\textbf{ImageNet}\footnote{\revise{https://github.com/yaoyao-liu/meta-transfer-learning}} is a larger dataset compared \revise{to} \emph{mini}ImageNet, and its categories are selected \revise{from a} hierarchical structure to split base and novel datasets semantically. 
We follow the split introduced in~\cite{ren2018meta} with base set of $20$ superclasses ($351$ classes), validation set of $6$ superclasses ($97$ classes) and novel set of $8$ superclasses ($160$ classes). 
Each class contains $1281$ images on average. 
\textbf{CUB}\footnote{\revise{http://www.vision.caltech.edu/visipedia/CUB-200-2011.html}} is a fine-grained dataset of $200$ bird categories with $11788$ images in total. 
Following the previous \revise{few-shot} setting in~\cite{hilliard2018few}, we use \revise{$100$, $50$ and $20$ classes for base, validation and novel set respectively.}
To make a fair comparison \revise{in model training and testing}, we crop the bounding \revise{boxes} provided by~\cite{triantafillou2017few} \revise{for all the images in CUB.}
\textbf{CIFAR-FS}\footnote{\revise{https://github.com/bertinetto/r2d2}} is a dataset derived from CIFAR-100~\cite{krizhevsky2009learning} \revise{with lower-resolution images.}
It contains $100$ classes with $600$ instances in each class. 
We follow the common split given by~\cite{bertinetto2018metalearning}, using $64$ classes to construct the base set, $16$ for validation, and $20$ as the novel set.

\mypar{Experimental setup.}
\revise{We present the implementation details and experiment settings in the following.}
\revise{Unless otherwise specified, our implementation details and experiment setting are same with the default setting adopt by majority few-shot learning methods~\cite{ye2020fewshot, Liu2020E3BM,Zhang_2020_CVPR,lee2019meta,hilliard2018few} for a fair comparison.}
\revise{Same as}~\cite{oreshkin2018tadam,lee2019meta}, we \revise{employ} ResNet-12~\cite{DBLP:journals/corr/HeZRS15} with $4$ residual blocks as the feature extractor in our experiments. 
Each \revise{residual} block consists of three $3\times3$ convolutional layers, each of which followed by a \revise{batch normlization} layer and a LeakyReLu~(0.1) activation. \revise{A $2\times2$ max-pooling layer is appended at the end of each block to downsample the spatial size.}
The number of filters in each block is $64$, $128$, $256$ and $512$ respectively.
Specifically, \revise{following}~\cite{lee2019meta}, we adopt the \textit{Dropout}~\cite{JMLR:v15:srivastava14a} in first two blocks to vanish $10\%$ of the output,
and adopt \textit{DropBlock}~\cite{ghiasi2018dropblock} in latter two blocks to vanish $10\%$ of output at channel level.
Finally, an average-pooling layer is employed to produce the input feature embedding.
\revise{
We use the baseline method R12-proto-ac introduced in~\cite{CAN} to train the backbone with the global and nearest neighbor classification loss.
}
\revise{SGD with momentum is adopted} as the optimizer to train the feature extractor \textit{from scratch}.
Momentum factor and strength of $L_{2}$ weight decay is set to $0.9$ and $5e-4$, respectively.
All \revise{input images} are resized to $84\times84$.
\revise{
Our initial learning rate is set to $0.1$ and decay to $0.006,~0.0012$ and $0.00024$ after $60,~70$ and $80$ epochs, respectively.
}
The total training epochs \revise{is set to $90$}. 
In all of our experiments, we normalize the feature with $L_2$ norm and reduce the feature dimension to $d=5$ using LLE~\cite{roweis2000nonlinear}
\revise{for the pre-processing part of ICI, while the classification part still use the original features}.
\revise{We use the logistic regression as our basic classifier.}
Our model and all baselines are evaluated over $2000$ episodes with $15$ test samples in each class. 

\begin{table*}[!ht]
\centering
\begin{tabular*}{\textwidth}{@{\extracolsep{\fill}} p{0.7cm} l l l l l l l l l}
\toprule 
\multirow{2}{*}{Setting} & \multirow{2}{*}{Model} &  \multicolumn{2}{c}{\emph{mini}ImageNet} & \multicolumn{2}{c}{\emph{tiered}ImageNet} & \multicolumn{2}{c}{CIFAR-FS} & \multicolumn{2}{c}{CUB}\tabularnewline
 & &$1$shot & $5$shot & $1$shot & $5$shot & $1$shot & $5$shot & $1$shot & $5$shot\tabularnewline
\midrule
\multirow{11}{*}{In.}
&Baseline$^{*}$~\cite{DBLP:journals/corr/abs-1904-04232} & $51.75$\ci{0.80} & $74.27$\ci{0.63} & - & - & - & - & $65.51$\ci{0.87} & $82.85$\ci{0.55}\tabularnewline
& Baseline++$^{*}$~\cite{DBLP:journals/corr/abs-1904-04232}&$51.87$\ci{0.77} &$75.68$\ci{0.63} & - & - & - & - & $67.02$\ci{0.90} & $83.58$\ci{0.54}\tabularnewline
& MatchingNet$^{*}$~\cite{vinyals2016matching}& $52.91^{\textcolor{black}{1}}$\ci{0.88} & $68.88^{\textcolor{black}{1}}$\ci{0.69} & - & - & - & - & $72.36^{\textcolor{black}{1}}$\ci{0.90} & $83.64^{\textcolor{black}{1}}$\ci{0.60}\tabularnewline
& ProtoNet$^{*}$~\cite{snell2017prototypical}& $54.16^{\textcolor{black}{1}}$\ci{0.82} & $73.68^{\textcolor{black}{1}}$\ci{0.65} & - & - & $72.20^{\textcolor{black}{3}}$ & $83.50^{\textcolor{black}{3}}$ & $71.88^{\textcolor{black}{1}}$\ci{0.91} & $87.42^{\textcolor{black}{1}}$\ci{0.48}\tabularnewline
& MAML$^{*}$~\cite{finn2017model}& $49.61^{\textcolor{black}{1}}$\ci{0.92}& $65.72^{\textcolor{black}{1}}$\ci{0.77} & - & - & - & - & $69.96^{\textcolor{black}{1}}$\ci{1.01} & $82.70^{\textcolor{black}{1}}$\ci{0.65}\tabularnewline
&RelationNet$^{*}$~\cite{sung2018learning} & $52.48^{\textcolor{black}{1}}$\ci{0.86} & $69.83^{\textcolor{black}{1}}$\ci{0.68} & - & - & - & - & $67.59^{\textcolor{black}{1}}$\ci{1.02} & $82.75^{\textcolor{black}{1}}$\ci{0.58}\tabularnewline
& adaResNet~\cite{munkhdalai2018rapid}& $56.88$ & $71.94$ & - & - & - & - & - & -\tabularnewline
& TapNet~\cite{yoon2019tapnet} & $61.65$ & $76.36$ & $63.08$& $80.26$  & - & - & - & -\tabularnewline
& CTM$^{\dag}$~\cite{li2019finding} & $64.12$ & $80.51$ & $68.41$ & $84.28$  & - & - & - & -\tabularnewline
&MetaOptNet~\cite{lee2019meta}&$64.09$&$80.00$&$65.81$&$81.75$&$72.60$&$84.30$&-&-\tabularnewline
\midrule

\multirow{4}{*}{Tran.}
&TPN~\cite{liu2018learning} & $59.46$ & $75.65$ & $58.68^{\textcolor{black}{4}}$ & $74.26^{\textcolor{black}{4}}$ & $65.89^{\textcolor{black}{4}}$ & $79.38^{\textcolor{black}{4}}$ & - & -\tabularnewline
&TEAM$^{*}$~\cite{qiao2019transductive}  & $60.07$ & $75.90$ & - & - & $70.43$ & $81.25$ & $80.16$ & $87.17$ \tabularnewline
&CAN+T~\cite{hou2019cross} & $67.19$\ci{0.55} & $80.64$\ci{0.35} & $73.21$\ci{0.58} & $84.93$\ci{0.38} & - & - & - & - \tabularnewline
&DPGN~\cite{yang2020dpgn} & $67.77$\ci{0.32} & $\textbf{84.60}$\ci{0.43} & $72.45$\ci{0.51} & $\textbf{87.24}$\ci{0.39} & $77.90$\ci{0.50} & $\textbf{90.20}$\ci{0.40} & $75.71$\ci{0.47} & $91.48$\ci{0.33} \tabularnewline
\midrule
\multirow{5}{*}{Semi.}
&MSkM + MTL & $62.10^{\textcolor{black}{2}}$ & $73.60^{\textcolor{black}{2}}$ & $68.6^{\textcolor{black}{2}}$ & $81.00^{\textcolor{black}{2}}$  & - & - & - &- \tabularnewline
&TPN + MTL & $62.70^{\textcolor{black}{2}}$ & $74.20^{\textcolor{black}{2}}$ & $72.10^{\textcolor{black}{2}}$ & $83.30^{\textcolor{black}{2}}$ & - & - & - & -\tabularnewline
&MSkM~\cite{ren2018meta}&$50.40$ & $64.40$ & $52.40$ & $69.90$ & - & - & - &  - \tabularnewline
&TPN~\cite{liu2018learning}& $52.78$& $66.42$ & $55.70$ & $71.00$ & - & - & - & -  \tabularnewline
&LST~\cite{sun2019learning}& $70.10$& $78.70$ & $77.70$ & $85.20$ & - & - & - & -  \tabularnewline
\midrule 
\midrule 

\multirow{2}{*}{Tran.}
&ICIC & $71.29$\ci{0.59} & $83.12$\ci{0.33} & $76.13$\ci{0.62} & $86.73$\ci{0.36}  & $78.47$\ci{0.60} & $86.41$\ci{0.36} & $90.38$\ci{0.42} & $94.30$\ci{0.20}\tabularnewline
&ICIR & $\textit{72.39}$\ci{0.62} & $83.27$\ci{0.33} & $77.48$\ci{0.62} & $86.84$\ci{0.36} & $79.19$\ci{0.63} & $86.66$\ci{0.36} & $ 90.89$\ci{0.43} & $94.36$\ci{0.20}\tabularnewline

\midrule 
\multirow{2}{*}{
\parbox[t]{0.7cm}{
Semi.  \\
15/15}}
&ICIC & $70.97$\ci{0.56} & $82.69$\ci{0.33} & $76.00$\ci{0.60} & $86.19$\ci{0.36}  & $78.44$\ci{0.58} & $86.10$\ci{0.36} & $89.89$\ci{0.42} & $94.00$\ci{0.20} \tabularnewline
&ICIR &  $72.32$\ci{0.58} & $82.78$\ci{0.33} & $76.98$\ci{0.61} & $86.24$\ci{0.36} & $79.20$\ci{0.58} & $86.14$\ci{0.36} & $90.45$\ci{0.42} & $94.00$\ci{0.20} \tabularnewline

\midrule
\multirow{2}{*}{\parbox[t]{0.7cm}{
Semi.  \\
30/50}}
&ICIC& $71.43$\ci{0.62} & $\textit{83.41}$\ci{0.35} & $\textit{78.01}$\ci{0.63} & $\textit{86.86}$\ci{0.37}  & $\textit{80.25}$\ci{0.58} & $86.99$\ci{0.36} & $\textit{91.75}$\ci{0.39} & $\textit{94.42}$\ci{0.20}\tabularnewline
&ICIR&  $\textbf{73.12}$\ci{0.65} & $83.28$\ci{0.37} & $\textbf{78.99}$\ci{0.66} & $86.76$\ci{0.39} & $\textbf{80.74}$\ci{0.61} & $\textit{87.16}$\ci{0.36} & $\textbf{92.12}$\ci{0.40} & $\textbf{94.52}$\ci{0.20} \tabularnewline

\bottomrule
\end{tabular*}
\caption{\label{fig:tfsl results}
The averaged accuracies with $95\%$ confidence intervals over $2000$ episodes on several datasets.  
Results with $\left(\cdot\right)^{1}$ are reported in~\cite{DBLP:journals/corr/abs-1904-04232}, 
with $\left(\cdot\right)^{\textcolor{black}{2}}$ are reported in~\cite{sun2019learning}, 
with $\left(\cdot\right)^{\textcolor{black}{3}}$ are reported in~\cite{lee2019meta}.
$\left(\cdot\right)^{\textcolor{black}{4}}$ is our implementation with the official code of~\cite{liu2018learning}. 
Methods denoted by $\left(\cdot\right)^*$ denotes ResNet-18 with input size $224\times224$, while $\left(\cdot\right)^{\dag}$ denotes ResNet-18 with input size $84\times84$. 
Our method and other alternatives use ResNet-12 with input size $84\times84$.
\textbf{In.} and \textbf{Tran.} indicate inductive and transductive setting, respectively. 
\textbf{Semi.} denotes semi-supervised setting where $(\cdot/\cdot)$ shows the number of unlabeled data available in $1$-shot and $5$-shot experiments.
ICIC indicates the logistic regression version of our model, 
and ICIR indicates the linear regression version.
We use logistic regression as our classifier.
\revise{
In each column, the highest result is in bold, and the second highest result is in italics.
}
}
\end{table*}
  
\subsection{Semi-supervised few-shot learning}
\mypar{Settings.} 
In the inference \revise{stage}, the unlabeled data from the corresponding category pool is utilized to help FSL. 
In our experiments, we report the following settings of SSFSL: 
(1) we use $15$ unlabeled samples for each class, the same as TFSL, 
\revise{to compare the performance of ICI between SSFSL and TFSL setting with the same number of unlabeled data.}
(2) we use $30$ unlabeled samples in $1$-shot task, and $50$ unlabeled samples in $5$-shot task, same as current SSFSL approaches~\cite{sun2019learning}; 
We denote these as 15/15 and 30/50 in Table~\ref{fig:tfsl results}. 
Note that CUB is a fine-grained dataset and does not have sufficient samples in each class, so we simply choose $5$ as support set, $15$ as query set and \revise{left} samples as unlabeled set (about $39$ samples on average) in the $5$-shot task in the latter setting. 
For all settings, we select $5$ samples for each class in each iteration. The process is finished when at most 15/15, 25/45 unlabeled instances are selected in total, respectively. 

\mypar{Competitors.} 
We compare our algorithm with \revise{existing} approaches in \revise{the SSFSL setting}. 
\revise{
TPN~\cite{liu2018learning} classifies query samples by propagating labels from the support set and extra unlabeled set.}
LST~\cite{sun2019learning} also uses self-taught learning strategy to pseudo-label data and select confident ones, but they \revise{achieve so by episodically training a neural network for many iterations.}
Other approaches include Masked Soft k-Means~\cite{ren2018meta} and a combination of MTL with TPN and Masked Soft k-Means reported by LST.

\mypar{Results.} The results are shown in Table~\ref{fig:tfsl results} where denoted as Semi. in the first column. 
\revise{We can observe} that:
(1) Comparing SSFSL with TFSL with the same number of unlabeled data, we can see that our SSFSL results are only reduced by a little or even beat TFSL results, which indicates that the information we got from the unlabeled data are robust and we can indeed handle the true distribution with unlabeled data practically.
(2) The more unlabeled data we get, the better performance we have. Thus we can learn more knowledge with more unlabeled data almost consistently using a linear classifier (\eg logistic regression). 
(3) \revise{Comparing} to other SSFSL approaches, ICI also achieves varying degrees of improvements in almost all tasks and datasets. These results further \revise{verify the effectiveness of our approach.} 

\subsection{Transductive few-shot learning}
\mypar{Settings.} 
In transductive few-shot learning setting, \revise{people} have the chance to access \revise{many} query data \revise{in one go} in the inference stage. 
Thus the unlabeled set and the query dataset are the same. 
In our experiments, we select $5$ instances for each class in each iteration and repeat our algorithm until all the query samples are included.

\mypar{Competitors.}
We compare ICI with current TFSL approaches. 
TPN~\cite{liu2018learning} constructs a graph and uses label propagation to transfer labels from support samples to query samples and learn their framework in a meta-learning way.
TEAM~\cite{qiao2019transductive} utilizes class prototypes with a data-dependent metric to inference labels of query samples.
\revise{
CAN+T~\cite{hou2019cross} uses the self-taught learning to train the model repeatedly within the specific designed network.
DPGN~\cite{yang2020dpgn} adopts contrastive comparisons to produce distribution representation.
}

\mypar{Results.} The results are shown in Table~\ref{fig:tfsl results} where denoted as Tran. in the first column. 
\revise{
Compared with current TFSL approaches, ICI is competitive, especially in the 1-shot tasks.
Importantly and theoretically, under mild conditions of restricted
eigenvalue, irrepresentability, and large error, we empirically show that our approach is guaranteed to collect  the correctly-predicted pseudo-labeled instances from the noisy pseudo-labeled set; and our ICIR results achieve very competitive performance in almost all dataset. Essentially, our algorithm is theoretically grounded,  orthogonal and useful to the other state-of-the-art methods. It is thus a future work of exploring how to incorporate our algorithm with the other competitors.
}

\subsection{Ablation study\label{subsec:Ablation-Study}}

\mypar{\revise{Visualization.}}
We visualize the regularization path of $\gamma$ in one episode of the inference process in Fig.~\ref{fig:effective} where red lines are instances that are correct-predicted while black lines are wrong-predicted ones. 
It is obvious that that most of the correct-predicted instances lie in the lower-left part. 
Since ICI select samples whose norm will vanish in a lower $\lambda$, so could get more correct-predicted instances than wrong-predicted instances in a high ratio. 
\begin{figure}[ht]
\begin{centering}
\includegraphics[width=0.8\columnwidth]{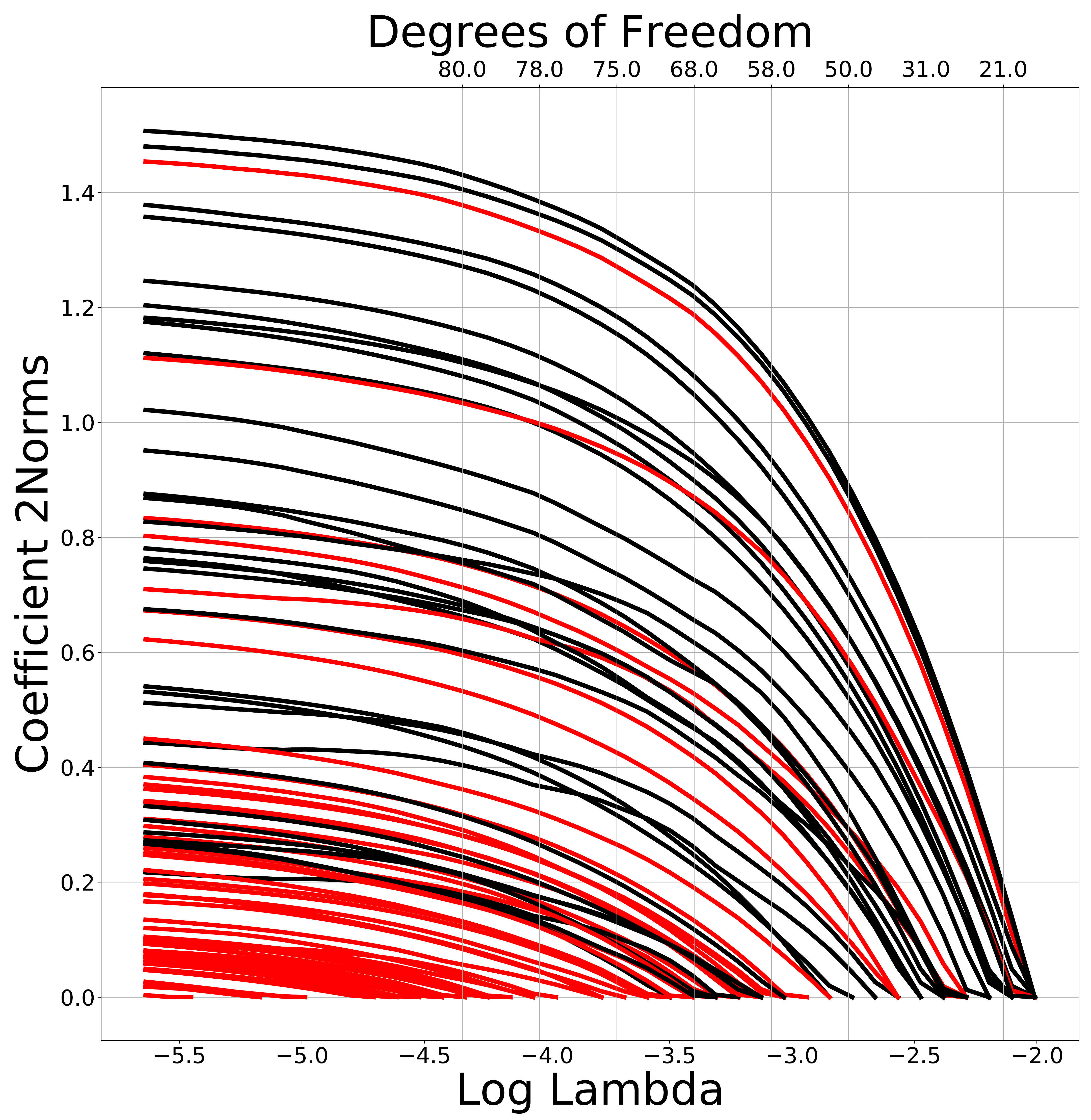}
\caption{\label{fig:effective}Regularization path of $\lambda$. Red lines are correct-predicted instances while black lines are wrong-predicted ones. ICI will choose instances in the lower-left subset.}
\end{centering}
\end{figure}

\mypar{\revise{Comparison with baselines.}}
To further show the effectiveness of ICI, 
we compare ICI with other sample selection strategies under the self-taught learning pipeline. 
\revise{
We consider the following baselines:
(1) RA (random): Select instances randomly.
(2) NN (nearest-neighbor): Select instances based on the distance between the pseudo-labeled
instances and the labeled instance. We will select the pseudo-labeled
instances which are the nearest neighbors of labeled instances with
the same (pseudo-)category.
(3) CO (confidence): Select instances based on the confidence given by the classifier,
where the confidence is defined as the  prediction scores/probabilities of the classifier. 
(4) CN (coefficient norm): Select instances based on the proposed metric
without considering the effect of $\gamma$. 
That is, selecting instances based on the y-axis in Fig.~\ref{fig:effective} instead of x-axis.
In this part, we have $15$ unlabeled instances for each class and select $5$ to re-train the classifier by different methods for Semi. and Tran. task on \emph{ mini}ImageNet. 
From Table~\ref{tab:ablation},
we observe that ICI outperforms all the baselines in all settings. }
\begin{table}[h]
\centering
\begin{tabular*}{\columnwidth}{@{\extracolsep{\fill}}lcccc}
\toprule 
\multirow{2}{*}{Model}&\multicolumn{2}{c}{Tran.}&\multicolumn{2}{c}{Semi.}\tabularnewline
&1shot&5shot&1shot&5shot\tabularnewline
\midrule
RA&$67.54$\ci{0.51}&$81.45$\ci{0.32}&$68.09$\ci{0.52}&$81.30$\ci{0.33}\tabularnewline
NN&$69.80$\ci{0.53}&$82.12$\ci{0.32}&$69.99$\ci{0.52}&$81.96$\ci{0.33}\tabularnewline
CO&$70.57$\ci{0.54}&$82.41$\ci{0.31}&$70.53$\ci{0.52}&$82.10$\ci{0.32}\tabularnewline
CN&$67.44$\ci{0.53}&$81.44$\ci{0.33}&$67.87$\ci{0.52}&$81.49$\ci{0.34}\tabularnewline
 \midrule
ICIR&  $\bf71.19$\ci{0.58} & $\bf82.55$\ci{0.32} & $\bf71.25 $\ci{0.55} & $\bf82.32$\ci{0.32}
\tabularnewline
\bottomrule
\end{tabular*}
\caption{\label{tab:ablation}
Compare to baselines on \emph{ mini}ImageNet under several settings.}
\end{table}
\revise{
The main reason why the confidence predicted by the classifier (i.e. the results of ``CO" in Table~\ref{tab:ablation})  is
not enough is that some high-confident predictions are actually wrongly-predicted.
Take the baseline of coefficient norm (CN) for example, the norm of the coefficient is directly the confidence score provided by the linear regression ``classifier'', where small norm indicates small error on fitting the corresponding sample.
In our illustration
of regularization path (see Fig.~\ref{fig:effective}),
the norm of some wrongly-predicted instances (see the lowest black
line for example) vanishes slower than the right-predicted instances.
This is a case when the confidence predicted by the classifier cannot
exclude the noise but ICI still works very well.
Particularly,  
the most important difference is that our x-axis method is theoretically guaranteed; in contrast, there is no theoretical guarantee for y-axis method and other sample selection baselines, as explained in Theorem~\eqref{thm:sufficiency}.
}

\begin{figure}[h]
\centering
\includegraphics[width=0.8\columnwidth]{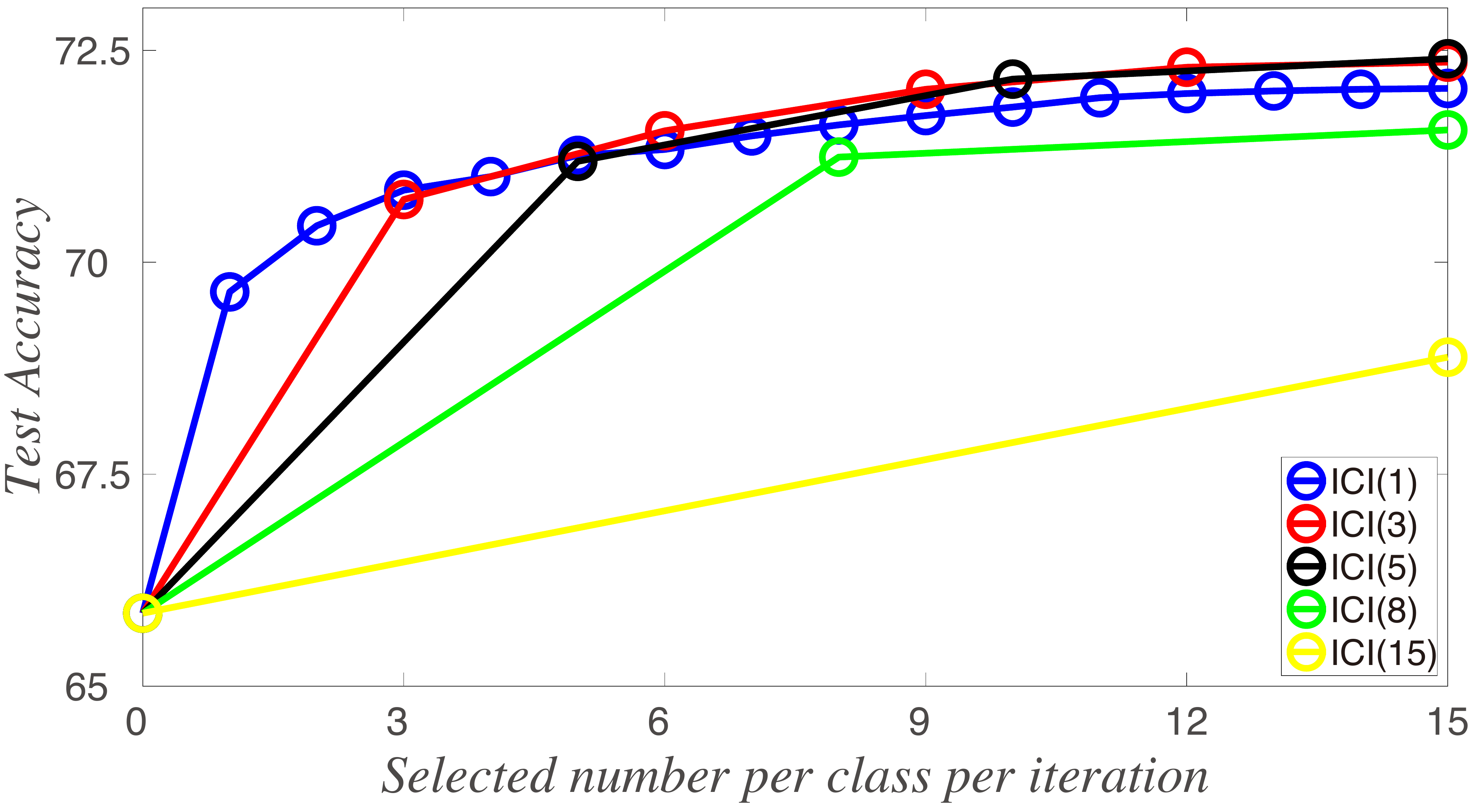} 
\caption{\label{fig:iter-manner}
Variation of accuracy as the selected samples increases over 2000 episodes on \emph{mini}ImageNet. 
``ICI (\textit{n})'': select \textit{n} samples per class in each iteration.}
\end{figure}
\mypar{Effectiveness of iterative manner.}
Our intuition is the proposed ICI learns to generate a set of trustworthy unlabelled data for classifier training. 
\revise{
One basic baseline is simply running the algorithm for one time, selecting a subset, re-training the classifier, and ending the process.
We argue that such a pipeline cannot utilize the information provided by the pseudo-labeled instances sufficiently.
To verify this, we run experiments with selecting different number of instances, and take different iterations in Figure~\ref{fig:iter-manner}.
Results suggest that ICI obtains better accuracy with iterative selection manner.
}
For example, select $6$ images with two iterations (ICI(3)) is superior to select $8$ images in one iteration (ICI(8)).
\revise{To make a balance between computational cost, and performance,  our experiments select $5$ images per iteration.}

\begin{table}[h]
\centering
\begin{tabular*}{\columnwidth}{@{\extracolsep{\fill}}lccccc}
\toprule 
Acc (\%)&0-10&10-20&20-30&30-40&40-50\tabularnewline
\midrule
b/t&0/0&0/0&1/2&7/16&91/133\tabularnewline
\midrule
\midrule
Acc (\%)&50-60&60-70&70-80&80-90&90-100\tabularnewline
\midrule
b/t&312/446&526/663&464/544&154/191&3/5\tabularnewline
\bottomrule
\end{tabular*}
\caption{\label{tab:stat}
We run 2000 episodes, with each episode training an initial classifier.
We denote 
``Acc'' as the accuracy intervals; and 
``b/T'' as the number of classifiers experienced improvement v.s. 
total classifiers in this accuracy interval. }
\end{table}
\mypar{Robustness against initial classifier.} 
What are the requirements for the initial linear classifier? Is it necessary to satisfy that the accuracy of the initial linear classifier is higher than 50\% or even higher? 
The answer is no.
As long as the initial linear classifier can be trained, theoretically our method should work.  
\revise{It thus is a future open question of the influence of initial classifier.}
We briefly validate it in Table~\ref{tab:stat}.
We run 2000 episodes, with each episode training an initial classifier with different classification accuracy.
Table~\ref{tab:stat}
shows that most classifiers can get improved by ICI regardless of the initial accuracy.

\begin{table}[H]
\centering
\begin{tabular*}{\columnwidth}{@{\extracolsep{\fill}}lcccc}
\toprule 
\multirow{2}{*}{Model}&\multicolumn{2}{c}{Tran.}&\multicolumn{2}{c}{Semi.}\tabularnewline
&1shot&5shot&1shot&5shot\tabularnewline
\midrule
kNN & $71.45$\ci{0.61}&$79.88$\ci{0.38}& $69.14$\ci{0.57} & $77.20$\ci{0.38} \tabularnewline
 SVM & $72.13$\ci{0.62} & $82.76$\ci{0.34}& $70.76$\ci{0.58} & $80.83$\ci{0.35}\tabularnewline
 LR & $72.39$\ci{0.62}& $83.27$\ci{0.33}& $72.32$\ci{0.58}& $82.78$\ci{0.33}\tabularnewline
\bottomrule
\end{tabular*}
\caption{\label{tab:classifiers}
Performance of ICI using different classifiers on \emph{ mini}ImageNet under several settings.}
\end{table}

\mypar{Robustness against choices of classifiers.}
\revise{
Naturally, our proposed ICI is orthogonal to the choices of classifiers. 
To verify this, we select two other popular machine learning classifiers, linear support vector machine and k-nearest neighbor classifier, and run the SSFSL/TFSL 1-shot/5-shot tasks on the \emph{mini}ImageNet dataset. 
From results listed in Table~\ref{tab:classifiers}, the performance on 1-shot task is comparable, while on 5-shot task LR is superior to the other two classifiers.
Thus, one can select the classifier which fits best in their own task and still enjoy the improvements given by ICI.
}

\mypar{Influence of reduced dimension.}  
In this part, we study the influence of reduced dimension $d$ in our algorithm on $5$-way $1$-shot \textit{mini}ImageNet experiments.
The results with reduced dimension $2$, $5$, $10$, $20$, $50$, and without dimensionality reduction \ie, $d=512$, are shown in Table~\ref{tab:reduced}. 
Our algorithm achieves better performance when the reduced dimension is much smaller than the number of instances (\ie, $d\ll n$), which is consistent with the theoretical property~\cite{fan2018partial}. 
Moreover, we can observe that our model achieves the best accuracy of $72.39\%$ when $d=5$.
Practically, we adopt $d=5$ in our model.

\begin{table}[H]
\begin{centering}
\begin{tabular*}{\columnwidth}{@{\extracolsep{\fill}}lclc}
\toprule
$d$ & Acc (\%)&Alg.&Acc (\%)\tabularnewline
\cmidrule{1-2} \cmidrule{3-4}
$2$ & $70.03$\ci{0.58}&Isomap~\cite{tenenbaum2000global} & $71.49$\ci{0.60}\tabularnewline
$5$ & $\bf72.39$\ci{0.62}&PCA~\cite{tipping1999probabilistic} & $71.52$\ci{0.63}\tabularnewline
$10$ & $71.80$\ci{0.61}&LTSA~\cite{zhang2004principal} & $70.10$\ci{0.59}\tabularnewline
$20$ & $71.17$\ci{0.59}&MDS~\cite{borg2003modern} & $68.05$\ci{0.53}\tabularnewline
$50$ & $69.30 $\ci{0.55}&LLE~\cite{roweis2000nonlinear} & $72.39 $\ci{0.62}\tabularnewline
$512$ & $67.08$\ci{0.51}&SE~\cite{belkin2003laplacian} & $72.43$\ci{0.63} \tabularnewline 
\bottomrule
\end{tabular*}
\par\end{centering}
\caption{\label{tab:reduced}Influence of reduced dimension and dimension reduction  algorithms.}
\end{table}

\mypar{Influence of dimension reduction algorithms.} 
Furthermore, we study the robustness of ICI to different dimension reduction algorithms.
We compare
Isomap~\cite{tenenbaum2000global},
principal components analysis~\cite{tipping1999probabilistic} (PCA),
local tangent space alignment~\cite{zhang2004principal} (LTSA),
multi-dimensional scaling~\cite{borg2003modern} (MDS),
locally linear embedding~\cite{roweis2000nonlinear} (LLE) and
spectral embedding~\cite{belkin2003laplacian} (SE)
on $5$-way $1$-shot \textit{mini}ImageNet experiments.
From Table~\ref{tab:reduced} we can observe that the performance of 
ICI is comparable across most of the dimensionality reduction algorithms (from LTAS $70.10\%$ to SE $72.43\%$) except MDS ($68.05\%$).
We adopt LLE for dimension reduction in our method.

\begin{table}[H]
\begin{centering}
\begin{tabular*}{\columnwidth}{@{\extracolsep{\fill}}lllcc}
\toprule 
\multirow{2}{*}{Features} & \multirow{2}{*}{Backbone}& \multirow{2}{*}{Task} & \multicolumn{2}{c}{Accuracy}\tabularnewline
 &  && Competitors & ICIR\tabularnewline
\midrule 
\multirow{2}{*}{CAN~\cite{hou2019cross}} & \multirow{2}{*}{ResNet-12} &1-shot& $67.19$\ci{0.55} & $70.53$\ci{0.63} \tabularnewline
& & 5-shot&$80.64$\ci{0.35} & $81.30$\ci{0.36}\tabularnewline
\midrule 
\multirow{2}{*}{E$^{3}$BM~\cite{Liu2020E3BM}} & \multirow{2}{*}{WRN-28-10} &1-shot& $71.4$ & $71.39$\ci{0.63} \tabularnewline
& & 5-shot& $81.2$ & $82.61$\ci{0.36}\tabularnewline
\midrule 
\multirow{2}{*}{TAFSSL~\cite{lichtenstein2020tafssl}} & \multirow{2}{*}{DenseNet} &1-shot& $77.06$\ci{0.26} & $76.83$\ci{0.60} \tabularnewline
&  & 5-shot& $84.99$\ci{0.14}& $85.12$\ci{0.32} \tabularnewline
\bottomrule
\end{tabular*}
\end{centering}
\caption{\label{tab:backbone}
Comparison under different backbones with exactly the same features.}
\end{table}
\mypar{Influence of backbone.}
\revise{
One might wonder how does the backbone influences the performance of ICI.
In this part, we select three different competitors with different backbones, including ResNet-12, ResNet-18, and WideResNet.
We use their pre-trained model to ensure that we are using exactly the  same features in experiments.
The transudctive few-shot learning results is listed in Table~\ref{tab:backbone}, from where we could find that ICI enjoys comparable or even better performance with different backbones using only a simple linear classifier.
Hence the effectiveness of ICI does not depend on the selection of backbone.
}

\begin{figure}[h]
\centering
\includegraphics[width=1\columnwidth]{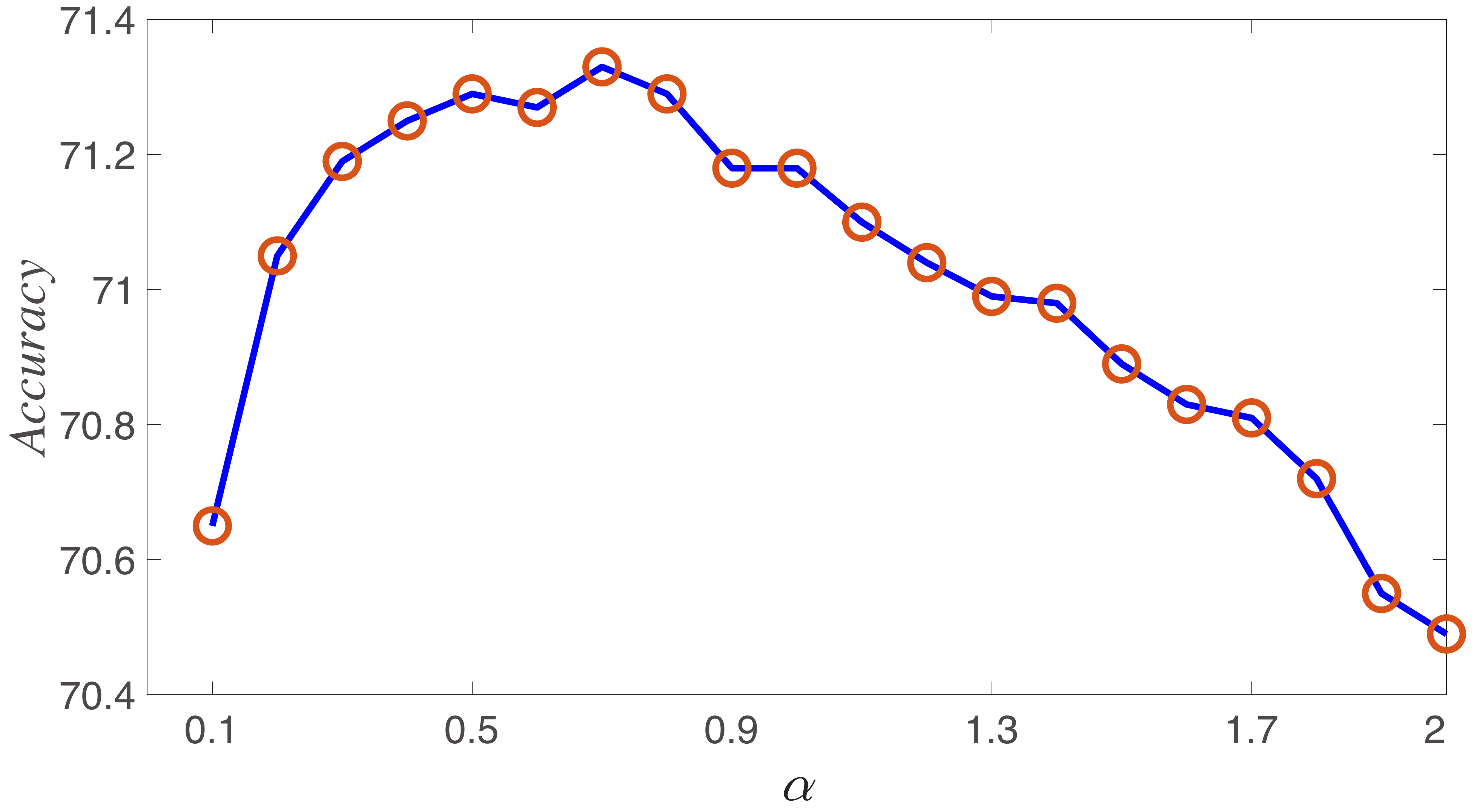} 
\caption{\label{fig:lr-penalty}
Validation accuracy with different $\alpha$s.
}
\end{figure}
\mypar{Influence of the penalty of logistic regression coefficient in ICI.}
In Section~\ref{sec:extension-lr}, 
we have shown that the penalty of the logistic regression coefficient is necessary for a unique solution. 
However, this introduces the hyper-parameters $\lambda_1$ and $\lambda_2$ which we need to trade-off. 
Note that since we still aim to find the solution path of $\bg$, which is solved when we use a list of $\lambda_2$s.
We set $\lambda_1 = \alpha\lambda_2$ for each solution point along the path and search for the best $\alpha$ based on the inference performance on the validation set.
Results are shown in Fig.~\ref{fig:lr-penalty}, indicating that the performance is maximized when $\alpha$ is set around $0.5$.
In our experiments, we use $\alpha=0.5$.

\section{Conclusion}

In this paper, we have proposed a statistical method, called Instance Credibility Inference (ICI) to exploit the distribution support of unlabeled instances for few-shot \revise{visual recognition}. 
The proposed ICI effectively select the most trustworthy pseudo-labeled instances according to their credibility to augment the training set. 
In order to measure the credibility of each pseudo-labeled instance, 
we propose to solve a hypothesis by increasing the sparsity of the incidental parameters and rank the pseudo-labeled instance \revise{according to} their sparsity degree. 
Theoretical analysis shows that under conditions of \textit{restricted eigenvalue, irrepresentability, and large error}, our ICI \revise{is able to find} all the correctly-predicted instances from the noisy pseudo-labeled set.
Extensive experiments show that our simple approach achieves appealing performance on four widely used few-shot \revise{visual recognition} benchmark datasets including \textit{mini}ImageNet, \textit{tiered}ImageNet, CIFAR-FS, and CUB.

\bibliographystyle{IEEEtran}
\bibliography{ici.bib}

\appendix[Proof of Theorem~\ref{thm:sufficiency}.]
\label{appendix}
\begin{prop}
\label{prop}
Assume that $\tilde{\bU}^\top\tilde{\bU}$ is invertible. 
If 
\begin{equation}
\label{appendix:prop-condition}
\left\Vert\lambda  \tilde{\bU}_{S^{c}}^{\top} \tilde{\bU}_{S}\left(\tilde{\bU}_{S}^{\top} \tilde{\bU}_{S}\right)^{-1}  \hat{\bm{v}}_{S}+ \tilde{\bU}_{S^{c}}^{\top}\left(\bm{I}-\bm{I}_S\right)(\tilde{\bU} \be)\right\Vert_{\infty}<\lambda
\end{equation}
holds for all $\hat{\bm{v}}_{S} \in[-1,1]^{S}$, where $\bm{I}_{S}=\tilde{\bU}_{S}\left(\tilde{\bU}_{S}^{\top} \tilde{\bU}_{S}\right)^{-1} \tilde{\bU}_{S}^{\top}$,
then the estimator $\hat{\vec{\bg}}$ of Eq.~\eqref{eq:thm-problem} satisfies that
\begin{equation*}
\hat{S}=\mathrm{supp}\left(\hat{\vec{\bg}}\right)\subseteq \mathrm{supp}\left(\vec{\bg}^*\right)=S. 
\end{equation*}

\noindent Moreover,
if the sign consistency
\begin{equation}
\label{appendix:sign-consisty}
\operatorname{sign}\left(\hat{\vec{\bg}}_S\right)=\operatorname{sign}\left(\vec{\bg}^{*}_S\right)
\end{equation}
holds,
Then 
$\hat{\vec{\bg}}$ is the unique solution of~\eqref{eq:thm-problem} with the same sign as $\hat{\vec{\bg}}^*$.
\end{prop}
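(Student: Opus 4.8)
The plan is to establish this support-recovery guarantee through the \textbf{primal--dual witness} (PDW) construction, the standard device behind Lasso model-selection consistency~\cite{zhao2006model,wainwright2009sharp}. Since the objective in Eq.~\eqref{eq:thm-problem} is convex, a vector $\hat{\vec{\bg}}$ is a minimizer if and only if there is a subgradient $\hat{\bm{z}}\in\partial\|\hat{\vec{\bg}}\|_1$ satisfying the stationarity (KKT) condition $\tilde{\bU}^{\top}(\tilde{\bU}\hat{\vec{\bg}}-\vec{\bm{y}}_{u})+\lambda\hat{\bm{z}}=0$ (using the $\tfrac12$-normalized quadratic loss, so the constants line up with Eq.~\eqref{appendix:prop-condition}), where $\hat{\bm{z}}_i=\operatorname{sign}(\hat{\vec{\bg}}_i)$ on the support and $\hat{\bm{z}}_i\in[-1,1]$ off it. First I would set up the witness pair: force $\hat{\vec{\bg}}_{S^c}=\bm{0}$ and let $\hat{\vec{\bg}}_S$ solve the \emph{restricted} problem obtained by dropping the $S^c$ coordinates. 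Assumption C1 (invertibility of $\tilde{\bU}_S^{\top}\tilde{\bU}_S$, which the proposition assumes directly) guarantees the restricted solve is well defined and that the $S$-block of the KKT system can be inverted.

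Next I would substitute the ground-truth decomposition $\vec{\bm{y}}_{u}=\tilde{\bU}_S\vec{\bg}_S^*+\tilde{\bU}\be$ from Eq.~\eqref{eq:y-ground-truth} into the $S$-block stationarity equation and solve for the error, obtaining the closed form $\hat{\vec{\bg}}_S-\vec{\bg}_S^*=(\tilde{\bU}_S^{\top}\tilde{\bU}_S)^{-1}\bigl(\tilde{\bU}_S^{\top}\tilde{\bU}\be-\lambda\hat{\bm{z}}_S\bigr)$. Feeding the resulting residual $\vec{\bm{y}}_{u}-\tilde{\bU}_S\hat{\vec{\bg}}_S=\lambda\tilde{\bU}_S(\tilde{\bU}_S^{\top}\tilde{\bU}_S)^{-1}\hat{\bm{z}}_S+(\bm{I}-\bm{I}_S)\tilde{\bU}\be$ into the $S^c$-block of the KKT condition yields an explicit formula $\lambda\hat{\bm{z}}_{S^c}=\lambda\tilde{\bU}_{S^c}^{\top}\tilde{\bU}_S(\tilde{\bU}_S^{\top}\tilde{\bU}_S)^{-1}\hat{\bm{z}}_S+\tilde{\bU}_{S^c}^{\top}(\bm{I}-\bm{I}_S)\tilde{\bU}\be$ for the off-support dual variable. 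Its two summands are precisely the irrepresentability term and the projected-noise term appearing inside the norm in Eq.~\eqref{appendix:prop-condition}. Hypothesis~\eqref{appendix:prop-condition}, since it holds for \emph{all} $\hat{\bm{v}}_S\in[-1,1]^S$ and in particular for the realized sign vector $\hat{\bm{z}}_S\in[-1,1]^S$, then delivers the \emph{strict} dual feasibility $\|\hat{\bm{z}}_{S^c}\|_\infty<1$.

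I would then invoke the strict-feasibility certificate: when the witness pair $(\hat{\vec{\bg}}_S,\bm{0})$ satisfies the full KKT system with $\|\hat{\bm{z}}_{S^c}\|_\infty<1$ strictly, it is a global optimum of the unrestricted Eq.~\eqref{eq:thm-problem}, and any optimal solution must vanish on every coordinate where $|\hat{\bm{z}}_i|<1$; hence every minimizer is supported inside $S$, giving $\hat{S}=\mathrm{supp}(\hat{\vec{\bg}})\subseteq S$. Combined with C1 (full column rank of $\tilde{\bU}_S$), the restricted solve is itself the unique minimizer over $S$, so the overall solution is unique; and if in addition the sign-consistency~\eqref{appendix:sign-consisty} holds, this unique minimizer carries the same sign pattern as $\vec{\bg}^*$, settling both assertions.

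The main obstacle I anticipate is handling the quantifier ``for all $\hat{\bm{v}}_S\in[-1,1]^S$'' cleanly: the dual variable $\hat{\bm{z}}_S$ is not known before the restricted problem is solved, so the argument must be arranged so that the uniform bound~\eqref{appendix:prop-condition} automatically covers whatever sign vector the solve produces, and the \emph{strictness} of the inequality must be propagated through the substitution in order to upgrade ``$\hat{S}\subseteq S$'' into exact uniqueness and sign recovery rather than mere optimality. Care is also needed in tracking the normalization of the quadratic loss in Eq.~\eqref{eq:thm-problem}, so that the coefficients of the irrepresentability and noise terms in the derived $\lambda\hat{\bm{z}}_{S^c}$ reproduce those of Eq.~\eqref{appendix:prop-condition} exactly.
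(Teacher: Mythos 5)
Your proposal is correct and follows essentially the same route as the paper's proof: both are the standard primal--dual witness / KKT stationarity argument of Wainwright, substituting the ground-truth decomposition $\vec{\bm{y}}_u=\tilde{\bU}_S\vec{\bg}_S^*+\tilde{\bU}\be$ into the $S$-block to get the error formula $\hat{\vec{\bg}}_S-\vec{\bg}_S^*=\left(\tilde{\bU}_S^{\top}\tilde{\bU}_S\right)^{-1}\left(\tilde{\bU}_S^{\top}\tilde{\bU}\be-\lambda\hat{\bm{v}}_S\right)$ and feeding the residual into the $S^c$-block to recover exactly the condition in Eq.~\eqref{appendix:prop-condition}. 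Your explicit witness construction (restricted solve plus strict-dual-feasibility certificate for uniqueness) is in fact a slightly more careful rendering than the paper's direct manipulation of the full KKT system, but it is the same argument in substance.
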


\begin{proof}
\minor{
Note that Eq.~\eqref{eq:thm-problem} is convex that has global minima. 
Denote Eq.~\eqref{eq:thm-problem} as $L$, the solution of $\partial L/\partial\vec{\bm{\gamma}}=0$ is the unique minimizer. 
Hence we have
\begin{equation}
\frac{\partial L}{\partial\vec{\bm{\gamma}}}=-\tilde{\bm{U}}^{\top}\left(\vec{\bm{y}}_u-\tilde{\bm{U}}\vec{\bg}\right)+\lambda\bm{v}=0
\end{equation}
where $\bm{v}=\partial \left\Vert\vec{\bg}\right\Vert_1 /\partial\vec{\bm{\gamma}}$.
Note that $\left\Vert\vec{\bg}\right\Vert_1 $ is non-differentiable, so we
instead compute its sub-gradient. 
Further note that $v_{i}=\partial\left\Vert \vec{\bm{\gamma}}\right\Vert _{1}/\partial\vec{\gamma}_{i}=\partial\left|\vec{\gamma}_{i}\right|/\partial\gamma_{i}$. 
Hence $v_{i}=\mathrm{sign}\left(\vec{\gamma}_{i}\right)$ if $\vec{\gamma}_{i}\neq0$ and $v_{i}\in\left[-1,1\right]$ if $\vec{\gamma}_{i}=0$.
To distinguish between the two cases, we assume $v_{i}\in\left(-1,1\right)$ if $\vec{\gamma}_{i} = 0$.
Hence there exists $\hat{\bm{v}}\in\mathbb{R}^{n\times1}$ such that
}
\begin{equation}
-\tilde{\bU}^{\top}\left(\vec{\bm{y}}_u-\tilde{\bU}  \hat{\vec{\bg}}\right)+\lambda  \hat{\bm{v}}=0,
\end{equation}
where \minor{$\hat{v}_{i}=\mathrm{sign}\left(\hat{\vec{\gamma}}_{i}\right)$ if $i\in\hat{S}$ and $\hat{v}_{i}\in(-1,1)$ if $i\in\hat{S}^c$}.

To obtain $\hat{S} \subseteq S$, 
\minor{we should have $\hat{\vec{\gamma}}_{i}=0$ for $i \in S^{c}$, that is, $\forall i\in S^{c},\left|\hat{v}_{i}\right|<1$, \ie}
\begin{equation}
\left\Vert\tilde{\bU}_{S^{c}}^{\top}\left(\vec{\bm{y}}_u-\tilde{\bU}_{S}  \hat{\vec{\bg}}_S\right)\right\Vert_{\infty}<\lambda,
\label{eq:prop-inequality}
\end{equation}
For \minor{$i \in S$}, we have
\begin{equation}
-\tilde{\bU}_{S}^{\top}\left(\vec{\bm{y}}_u-\tilde{\bU}_{S} \hat{\vec{\bg}}_{S}\right)+\lambda \hat{\bm{v}}_{S}=0.
\end{equation}
If $\tilde{\bU}^\top\tilde{\bU}$ is invertible then
\minor{
\begin{equation}
\hat{\vec{\bm{\gamma}}}_{S}=\left(\tilde{\bm{U}}_{S}^{\top}\tilde{\bm{U}}_{S}\right)^{-1}\left(\tilde{\bm{U}}_{S}^{\top}\vec{\bm{y}}_u-\lambda\hat{\bm{v}}_{S}\right)
\end{equation}
Recall that 
we have
\begin{equation}
\vec{\bm{y}}_u=\tilde{\bm{U}}_{S}\vec{\bm{\gamma}}_{S}^{*}+\tilde{\bm{U}}\vec{\bm{\varepsilon}}
\label{eq:y-ground-truth-appendix}
\end{equation}
Hence
}
\begin{equation}
\hat{\vec{\bg}}_{S}=\vec{\bg}_{S}^{*}+\delta_{S}, \quad \delta_{S}:=\left(\tilde{\bU}_{S}^{\top} \tilde{\bU}_{S}\right)^{-1}\left[\tilde{\bU}_{S}^{\top} \tilde{\bU} \vec{\bm{\varepsilon}}-\lambda\hat{\bm{v}}_{S}\right].
\label{eq:prop-condition}
\end{equation}
Plugging~\eqref{eq:prop-condition} and~\eqref{eq:y-ground-truth-appendix} into~\eqref{eq:prop-inequality} we have
\begin{equation}
\left\Vert\tilde{\bU}_{S^{c}}^{\top} \tilde{\bU} \vec{\bm{\varepsilon}}- \tilde{\bU}_{S^{c}}^{\top} \tilde{\bU}_{S}\left(\tilde{\bU}_{S}^{\top} \tilde{\bU}_{S}\right)^{-1}\left[\tilde{\bU}_{S}^{\top} \tilde{\bU} \vec{\bm{\varepsilon}}-\lambda \hat{\bm{v}}_{S}\right]\right\Vert_{\infty}<\lambda,
\end{equation}
or equivalently
\begin{equation}
\left\Vert\lambda  \tilde{\bU}_{S^{c}}^{\top} \tilde{\bU}_{S}\left(\tilde{\bU}_{S}^{\top} \tilde{\bU}_{S}\right)^{-1}  \hat{\bm{v}}_{S}+ \tilde{\bU}_{S^{c}}^{\top}\left(\bm{I}-\bm{I}_S\right)\tilde{\bU} \vec{\bm{\varepsilon}}\right\Vert_{\infty}<\lambda,
\end{equation}
where $\bm{I}_S=\tilde{\bU}_{S}\left(\tilde{\bU}_{S}^{\top} \tilde{\bU}_{S}\right)^{-1} \tilde{\bU}_{S}^{\top}$.
To ensure the sign consistency, replacing $\hat{\bm{v}}_{S}=\operatorname{sign}\left(\vec{\bg}_{S}^{*}\right)$ in the inequality above leads to the final result.
\end{proof}
\begin{lem}
\label{lemma}
Assume that $\vec{\bm{\varepsilon}}$ is 
\minor{indenpendent sub-Gaussian with zero mean and bounded variance
$\mathrm{Var}\left(\vec{\bm{\varepsilon}}_{i}\right)\leq\sigma^2$.
}
Then with probability at least
\begin{equation}
1-2 c n \exp \left(-\frac{\lambda^{2} \eta^{2} }{2  \sigma^{2} \max_{i\in S^{c}}\left\Vert \tilde{\bm{U}}_{i}\right\Vert _{2}^{2}}\right)
\end{equation}
there holds
\begin{equation}
\label{appendix:first-bound}
\left\|\tilde{\bU}_{S^{c}}^{\top}\left(\bm{I}-\bm{I}_{S}\right)\left(\tilde{\bU} \vec{\be}\right)\right\|_{\infty} \leq \lambda \eta
\end{equation}
and
\begin{equation}
\label{appendix:second-bound}
\left\|\left(\tilde{\bU}_{S}^{\top} \tilde{\bU}_{S}\right)^{-1} \tilde{\bU}_{S}^{\top} \tilde{\bU} \vec{\be}\right\|_{\infty} \leq \frac{\lambda \eta}{ \sqrt{C_{\min }} \max_{i\in S^{c}}\left\Vert \tilde{\bm{U}}_{i}\right\Vert _{2}}.
\end{equation}
\end{lem}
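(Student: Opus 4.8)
The plan is to treat each of the two quantities in \eqref{appendix:first-bound} and \eqref{appendix:second-bound} as a vector whose coordinates are linear forms in the independent sub-Gaussian noise $\vec{\be}\in\mathbb{R}^{nc}$, to bound the $\ell_2$-norm of the coefficient vector attached to each coordinate, and then to apply a scalar sub-Gaussian tail bound followed by a union bound over coordinates. First I would record the elementary fact that if $\vec{\be}$ has independent, zero-mean entries with variance proxy $\sigma^2$, then for any fixed $\bm{a}$ the linear form $\bm{a}^\top\vec{\be}$ is sub-Gaussian with variance proxy $\sigma^2\|\bm{a}\|_2^2$, so that
\begin{equation*}
\mathbb{P}\left(\left|\bm{a}^\top\vec{\be}\right|>t\right)\leq 2\exp\left(-\frac{t^2}{2\sigma^2\|\bm{a}\|_2^2}\right).
\end{equation*}
The whole lemma then reduces to controlling $\|\bm{a}\|_2$ for the relevant rows.

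The key structural input is that $\tilde{\bU}=\bU_2^{\top}$ has orthonormal rows, i.e.\ $\tilde{\bU}\tilde{\bU}^{\top}=\bm{I}$ (equivalently $\tilde{\bU}^{\top}=\bU_2$ has orthonormal columns, since $\bU$ is orthogonal), so left-multiplication by $\tilde{\bU}^{\top}$ preserves the $\ell_2$-norm; and that $\bm{I}_S=\tilde{\bU}_S(\tilde{\bU}_S^{\top}\tilde{\bU}_S)^{-1}\tilde{\bU}_S^{\top}$ is an orthogonal projection, so $\bm{I}-\bm{I}_S$ has operator norm at most $1$. For \eqref{appendix:first-bound}, the coordinate indexed by $i\in S^{c}$ is $\tilde{\bU}_i^{\top}(\bm{I}-\bm{I}_S)\tilde{\bU}\,\vec{\be}$, whose coefficient vector is $\tilde{\bU}^{\top}(\bm{I}-\bm{I}_S)\tilde{\bU}_i$; by norm preservation and the projection bound its $\ell_2$-norm is $\|(\bm{I}-\bm{I}_S)\tilde{\bU}_i\|_2\leq\|\tilde{\bU}_i\|_2\leq\sqrt{\mu_{\tilde{\bU}}}$. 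Taking $t=\lambda\eta$ yields per-coordinate failure probability $2\exp\!\left(-\lambda^2\eta^2/(2\sigma^2\mu_{\tilde{\bU}})\right)$.

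For \eqref{appendix:second-bound}, the coordinate indexed by $i\in S$ has coefficient vector $\tilde{\bU}^{\top}\tilde{\bU}_S(\tilde{\bU}_S^{\top}\tilde{\bU}_S)^{-1}\bm{e}_i$; using $\tilde{\bU}\tilde{\bU}^{\top}=\bm{I}$ its squared $\ell_2$-norm collapses to $\bm{e}_i^{\top}(\tilde{\bU}_S^{\top}\tilde{\bU}_S)^{-1}\bm{e}_i$, a diagonal entry of the inverse Gram matrix, which is bounded by the operator norm $1/\lambda_{\min}(\tilde{\bU}_S^{\top}\tilde{\bU}_S)=1/C_{\min}$. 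Choosing $t=\lambda\eta/(\sqrt{C_{\min}}\sqrt{\mu_{\tilde{\bU}}})$ makes the exponent coincide exactly with that of the first bound. Finally I would union-bound over the $|S^{c}|$ coordinates of the first event together with the $|S|$ coordinates of the second; since $|S^{c}|+|S|=cn$, the total failure probability is $2cn\exp\!\left(-\lambda^2\eta^2/(2\sigma^2\mu_{\tilde{\bU}})\right)$, which is precisely the complement of the stated probability (and not twice it).

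The main obstacle I expect is the careful bookkeeping of the coefficient-norm bounds: verifying the norm-preservation property of $\tilde{\bU}^{\top}$ directly from the SVD construction of $\bU=[\bU_1,\bU_2]$, identifying $\bm{e}_i^{\top}(\tilde{\bU}_S^{\top}\tilde{\bU}_S)^{-1}\bm{e}_i$ as a diagonal entry controlled by $C_{\min}$, and—crucially—calibrating the two thresholds $t$ so that both tails share the single exponent $\lambda^2\eta^2/(2\sigma^2\mu_{\tilde{\bU}})$. Matching the exponents is what allows a single clean union bound over all $cn$ coordinates to deliver the advertised probability.
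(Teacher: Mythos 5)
Your proposal is correct and follows essentially the same route as the paper's proof: a per-coordinate sub-Gaussian tail bound (the paper invokes Hoeffding after bounding the variance of each coordinate of $\bm{z}^c$ and $\bm{z}$, which is exactly your coefficient-norm computation, using $\tilde{\bU}\tilde{\bU}^{\top}=\bm{I}$ and $\left\Vert\bm{I}-\bm{I}_S\right\Vert\leq 1$ implicitly), with the same threshold calibrations $t=\lambda\eta$ and $t=\lambda\eta/\bigl(\sqrt{C_{\min}\,\mu_{\tilde{\bU}}}\bigr)$ and a union bound over all $cn$ coordinates. Your version merely makes explicit what the paper leaves implicit (the orthonormality of the rows of $\tilde{\bU}$ and the diagonal-entry bound $\bm{e}_i^{\top}(\tilde{\bU}_S^{\top}\tilde{\bU}_S)^{-1}\bm{e}_i\leq 1/C_{\min}$), so no substantive difference remains.
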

\begin{proof}
Let $\bm{z}^c= \tilde{\bU}_{S^{c}}^{\top}\left(\bm{I}-\bm{I}_{S}\right)\left(\tilde{\bU} \vec{\be}\right)$, for each  \minor{$i\in S^c$} the variance can be bounded by
\begin{equation*}
\operatorname{Var}\left(\bm{z}_i^c\right) \leq\sigma^{2}\tilde{\bm{U}}_{i}^{\top}\left(\bm{I}-\bm{I}_{S}\right)^{2}\tilde{\bm{U}}_{i} \leq  \sigma^{2} \max_{i\in S^{c}}\left\Vert \tilde{\bm{U}}_{i}\right\Vert _{2}^{2}.
\end{equation*}
Hoeffding inequality implies that 
\begin{equation*}
\begin{aligned}
& \mathbb{P}\left(\left\| \tilde{\bU}_{S^{c}}^{\top}\left(\bm{I}-\bm{I}_{S}\right)\left(\tilde{\bU} \vec{\be}\right)\right\|_{\infty} \geq t\right)\\
& \leq 2\left|S^{c}\right| \exp \left(-\frac{t^{2} }{2  \sigma^{2} \max_{i\in S^{c}}\left\Vert \tilde{\bm{U}}_{i}\right\Vert _{2}^{2}}\right),
\end{aligned}
\end{equation*}
Setting $t=\lambda\eta$ leads to the result.

Now let $\bm{z}=\left(\tilde{\bm{U}}_{S}^{\top}\tilde{\bm{U}}_{S}\right)^{-1}\tilde{\bm{U}}_{S}^{\top}\tilde{\bm{U}}\vec{\bm{\varepsilon}}$, we have
\begin{equation*}
\begin{aligned} \mathrm{Var}\left(\bm{z}\right) &=\left(\tilde{\bm{U}}_{S}^{\top}\tilde{\bm{U}}_{S}\right)^{-1}\tilde{\bm{U}}_{S}^{\top}\tilde{\bm{U}}\mathrm{Var}\left(\vec{\bm{\varepsilon}}\right)\tilde{\bm{U}}^{\top}\tilde{\bm{U}}_{S}\left(\tilde{\bm{U}}_{S}^{\top}\tilde{\bm{U}}_{S}\right)^{-1}  \\ 
& \leq\sigma^{2}\left(\tilde{\bm{U}}_{S}^{\top}\tilde{\bm{U}}_{S}\right)^{-1}
 \leq\frac{\sigma^{2}}{C_{\min}}\bm{I}.
\end{aligned}
\end{equation*}
Then
\begin{equation*}
\mathbb{P}\left(\left\|\left(\tilde{\bU}_{S}^{\top} \tilde{\bU}_{S}\right)^{-1} \tilde{\bU}_{S}^{\top} \tilde{\bU}  \vec{\be}\right\|_{\infty} \geq t\right) \leq 2\left|S\right| \exp \left(-\frac{t^{2}  C_{\min } }{2 \sigma^{2}}\right).
\end{equation*}
Choose
\begin{equation}
t=\frac{\lambda \eta}{ \sqrt{C_{\min }} \max_{i\in S^{c}}\left\Vert \tilde{\bm{U}}_{i}\right\Vert _{2}},
\end{equation}
then there holds
\begin{equation*}
\begin{aligned}
& \mathbb{P}\left\{\|\left(\tilde{\bU}_{S}^{\top} \tilde{\bU}_{S}\right)^{-1} \tilde{\bU}_{S}^{\top} \tilde{\bU}  \vec{\be}\|_{\infty} \geq \frac{\lambda \eta}{ \sqrt{C_{\min }} \max_{i\in S^{c}}\left\Vert \tilde{\bm{U}}_{i}\right\Vert _{2}}\right\} \\
& \leq 2\left|S\right| \exp \left(-\frac{\lambda^{2} \eta^{2} }{2  \sigma^{2} \max_{i\in S^{c}}\left\Vert \tilde{\bm{U}}_{i}\right\Vert _{2}^{2}}\right).
\end{aligned}
\end{equation*}
\end{proof}
\begin{proof}[Proof of Theorem 1]
The proof essentially follows the treatment in~\cite{wainwright2009sharp} as well as the Huber's LASSO case in~\cite{xu2021evaluating}.
The results follow by applying Lemma~\ref{lemma} to Proposition~\ref{prop}.
Inequality~\eqref{appendix:prop-condition} holds if condition C2 and the first bound~\eqref{appendix:first-bound} hold, 
which proves the first part of the theorem.
The sign consistency~\eqref{appendix:sign-consisty} holds if condition C3 and the second bound~\eqref{appendix:second-bound} hold, 
which gives the second part of the theorem.

\minor{
It suﬃces to show that $\hat{S}\subseteq S$ implies $\hat{O}\subseteq O$.
Consider one instance $i$, there are three possible cases for $\bm{\gamma}_{i}^{*}\in\mathbb{R}^{1\times c}$: 
(1) $\gamma_{i,j}^{*}\neq0,\forall j\in\left[c\right]$; 
(2) $\gamma_{i,j}^{*}=0,\forall j\in\left[c\right]$; 
(3) $\exists j,k\in\left[c\right],s.t.\ \gamma_{i,j}^{*}=0,\gamma_{i,k}^{*}\neq0$.
If instance $i$ follows case (1) or case (3), then $i\in O$.
If it follows case (2), then $i\in O^{c}$,
and the indexes of all elements of $\bg_i$ are in $S^c$.
Since we have $\hat{S}\subseteq S$, all elements of $\bg_{i}$ is in $\hat{S}^{c}$, hence $i\in\hat{O}^{c}$. 
Then we have $\hat{O}\subseteq O$.
}
\end{proof}

\begin{IEEEbiography}[{\includegraphics[width=1in,height=1.25in,clip,keepaspectratio]{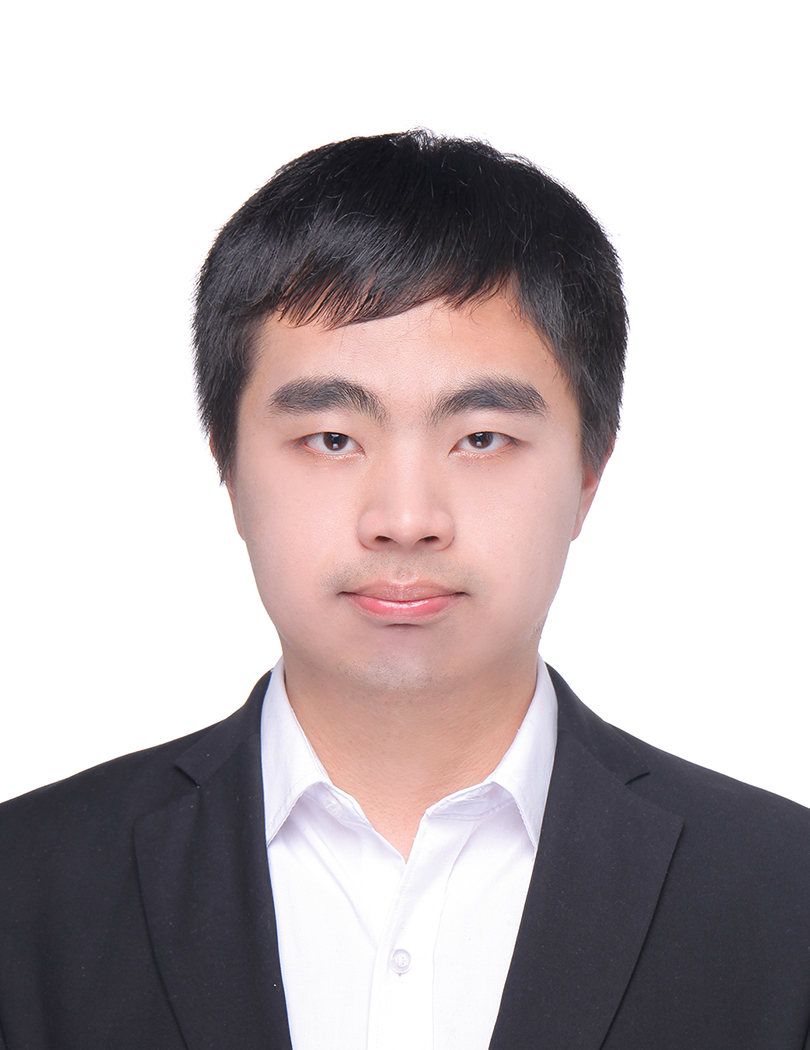}}]{Yikai Wang}
  is a PhD candidate at the School of Data Science, Fudan University.
  He works in Shanghai  Key  Lab  of  Intelligent  Information  Processing under the supervision of Prof. Yanwei Fu. 
  He received the Bachelor degree of mathematics from the School of Mathematical Sciences, Fudan University, in 2019.
  His current research interests include theoretically guaranteed machine learning algorithms and applications to computer vision.
  \end{IEEEbiography}
  
  \begin{IEEEbiography}[{\includegraphics[width=1in,height=1.25in,clip,keepaspectratio]{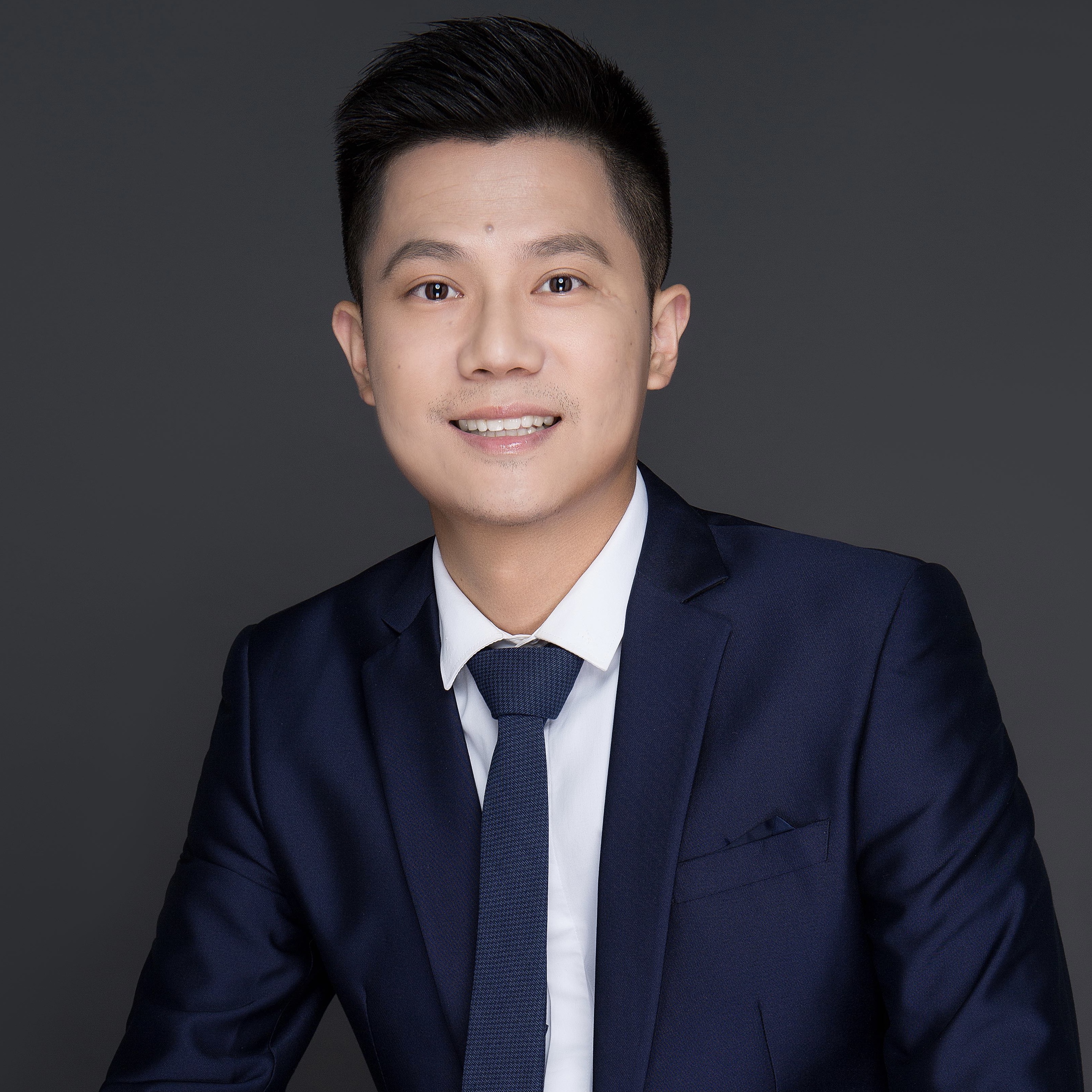}}]{Li Zhang} is a tenure-track Associate Professor at the School of Data Science, Fudan University. Previously, he was a Research Scientist at Samsung AI Center Cambridge, and a Postdoctoral Research Fellow at the University of Oxford. Prior to joining Oxford, he read his PhD in computer science at Queen Mary University of London.
  His research interests include computer vision and deep learning.
  \end{IEEEbiography}

  \begin{IEEEbiography}[{\includegraphics[width=1in,height=1.25in,clip,keepaspectratio]{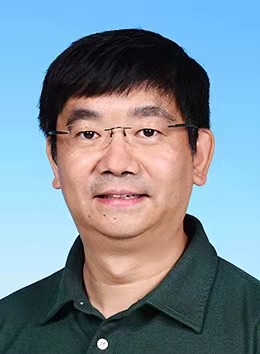}}]{Yuan Yao}
  received the B.S.E and M.S.E in control engineering both from Harbin Institute of Technology, China, in 1996 and 1998, respectively, M.Phil in mathematics from City University of Hong Kong in 2002, and Ph.D. in mathematics from the University of California, Berkeley, in 2006.
  Since then he has been with Stanford University and in 2009, he joined the Department of Probability and Statistics in School of Mathematical Sciences, Peking University, Beijing, China. 
  He is currently an Associate Professor of Mathematics, Chemical \& Biological Engineering, and by courtesy, Computer Science \& Engineering, Hong Kong University of Science and Technology, Clear Water Bay, Kowloon, Hong Kong SAR, China. 
  His current research interests include topological and geometric methods for high dimensional data analysis and statistical machine learning, with applications in computational biology, computer vision, and information retrieval. 
  Dr. Yao is a member of American Mathematical Society (AMS), Association for Computing Machinery (ACM), Institute of Mathematical Statistics (IMS), and Society for Industrial and Applied Mathematics (SIAM). 
  He served as area or session chair in NIPS and ICIAM, as well as a reviewer of Foundation of Computational Mathematics, IEEE Trans. Information Theory, J. Machine Learning Research, and Neural Computation, etc.
  \end{IEEEbiography}
  
  \begin{IEEEbiography}[{\includegraphics[width=1in,height=1.25in,clip,keepaspectratio]{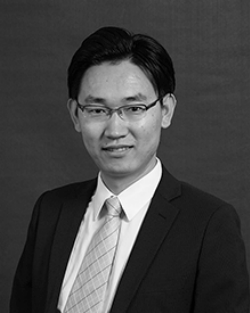}}]{Yanwei Fu} received the MEng degree from the Department of Computer Science and Technol- ogy, Nanjing University, China, in 2011, and the PhD degree from the Queen Mary University of London, in 2014. He held a post-doctoral position at Disney Research, Pittsburgh, PA, from 2015 to 2016. He is currently a tenure-track professor with Fudan University.  
  He was appointed as the Professor of Special Appointment (Eastern Scholar) at Shanghai Institutions of Higher Learning.
  His work has led to many awards, including the IEEE ICME 2019 best paper.
  He published more than 80 journal/conference papers including IEEE TPAMI, TMM, ECCV, and CVPR. His research interests are one-shot learning, and learning based 3D reconstruction.
  \end{IEEEbiography}
  
\end{document}